\newtheorem{theorem}{Theorem}
\newcommand{\ie}{\textit{i.e.}}
\newcommand{\eg}{\textit{e.g.}}
\newcommand{\etal}{\textit{et al.}}
\newcommand{\best}[1]{{\textcolor{red}{\textbf{#1}}}}
\newcommand{\secondbest}[1]{{\color{blue}{\textbf{#1}}}}
\def\BibTeX{{\rm B\kern-.05em{\sc i\kern-.025em b}\kern-.08em
    T\kern-.1667em\lower.7ex\hbox{E}\kern-.125emX}}
\begin{document}
\title{Medical Image Debiasing by Learning Adaptive Agreement from a Biased Council}
\author{Luyang~Luo, \IEEEmembership{Member, IEEE}, Xin Huang, Minghao Wang, Zhuoyue Wan, and Hao Chen, \IEEEmembership{Senior Member, IEEE}
\thanks{This work was supported by the Pneumoconiosis Compensation Fund Board, HKSARS (Project No. PCFB22EG01), Hong Kong Innovation and Technology Fund (Project No. ITS/028/21FP), Shenzhen Science and Technology Innovation Committee Fund (Project No. SGDX20210823103201011), and the Project of Hetao Shenzhen-Hong Kong Science and Technology Innovation Cooperation Zone (HZQB-KCZYB-2020083). (Corresponding author: Hao Chen.)}
\thanks{Luyang Luo and Xin Huang are with the Department of Computer Science and Engineering, The Hong Kong University of Science and Technology, Hong Kong, China (e-mail: cseluyang@ust.hk).}
\thanks{Minghao Wang is with the Department of Chemical and Biological Engineering, The Hong Kong University of Science and Technology, Hong Kong, China.}
\thanks{Zhuoyue Wan is with Department of Computing, The Hong Kong Polytechnic University, Hong Kong, China.}
\thanks{Hao Chen is with the Department of Computer Science and Engineering and Department of Chemical and Biological Engineering, The Hong Kong University of Science and Technology, Hong Kong, China. 
Hao Chen is also affiliated with HKUST Shenzhen-Hong Kong Collaborative Innovation Research Institute, Futian, Shenzhen, China. (e-mail: jhc@cse.ust.hk).}
\thanks{}}

\maketitle

\begin{abstract}
Deep learning could be prone to learning shortcuts raised by dataset bias and result in inaccurate, unreliable, and unfair models, which impedes its adoption in real-world clinical applications.  Despite its significance, there is a dearth of research in the medical image classification domain to address dataset bias. Furthermore, the bias labels are often agnostic, as identifying biases can be laborious and depend on post-hoc interpretation. This paper proposes learning Adaptive Agreement from a Biased Council (Ada-ABC), a debiasing framework that does not rely on explicit bias labels to tackle dataset bias in medical images. Ada-ABC develops a biased council consisting of multiple classifiers optimized with generalized cross entropy loss to learn the dataset bias. A debiasing model is then simultaneously trained under the guidance of the biased council. Specifically, the debiasing model is required to learn adaptive agreement with the biased council by agreeing on the correctly predicted samples and disagreeing on the wrongly predicted samples by the biased council. In this way, the debiasing model could learn the target attribute on the samples without spurious correlations while also avoiding ignoring the rich information in samples with spurious correlations. We theoretically demonstrated that the debiasing model could learn the target features when the biased model successfully captures dataset bias. Moreover, to our best knowledge, we constructed the first medical debiasing benchmark from four datasets containing seven different bias scenarios. Our extensive experiments practically showed that our proposed Ada-ABC outperformed competitive approaches, verifying its effectiveness in mitigating dataset bias for medical image classification. The codes and organized benchmark datasets will be released via \url{https://github.com/LLYXC/PBBL}. 
\end{abstract}

\begin{IEEEkeywords}
Shortcut Learning, Dataset Bias, Trustworthy Artificial Intelligence, Deep Learning
\end{IEEEkeywords}

\section{Introduction}

Artificial intelligence (AI), typically represented by deep learning, has achieved expert-level performance in many domains of medical image analysis \cite{topol2019high}.
However, the trustworthiness of deep learning models is challenged by their preference of learning from spurious correlations caused by shortcuts, or dataset biases \cite{geirhos2020shortcut,oakden2020hidden}.
Concerns have also been raised in the medical image classification domain that deep models could learn biases other than the targeted features \cite{oakden2020hidden,luo2022pseudo,degrave2021ai,larrazabal2020gender,gichoya2022ai}, leading to misdiagnosis and unfairness for the less-represented groups.
Consequently, there are rising calls to include more evaluation procedures to ensure that a deep learning model is unbiased before deployment as a medical product \cite{bluemke2020assessing,taylor2022uk}.
Hence, mitigating dataset bias to develop trustworthy medical models plays a significant role in facilitating the integration of deep learning into real-world clinical applications.

\begin{figure}[t]
\begin{center}
   \includegraphics[width=\linewidth]{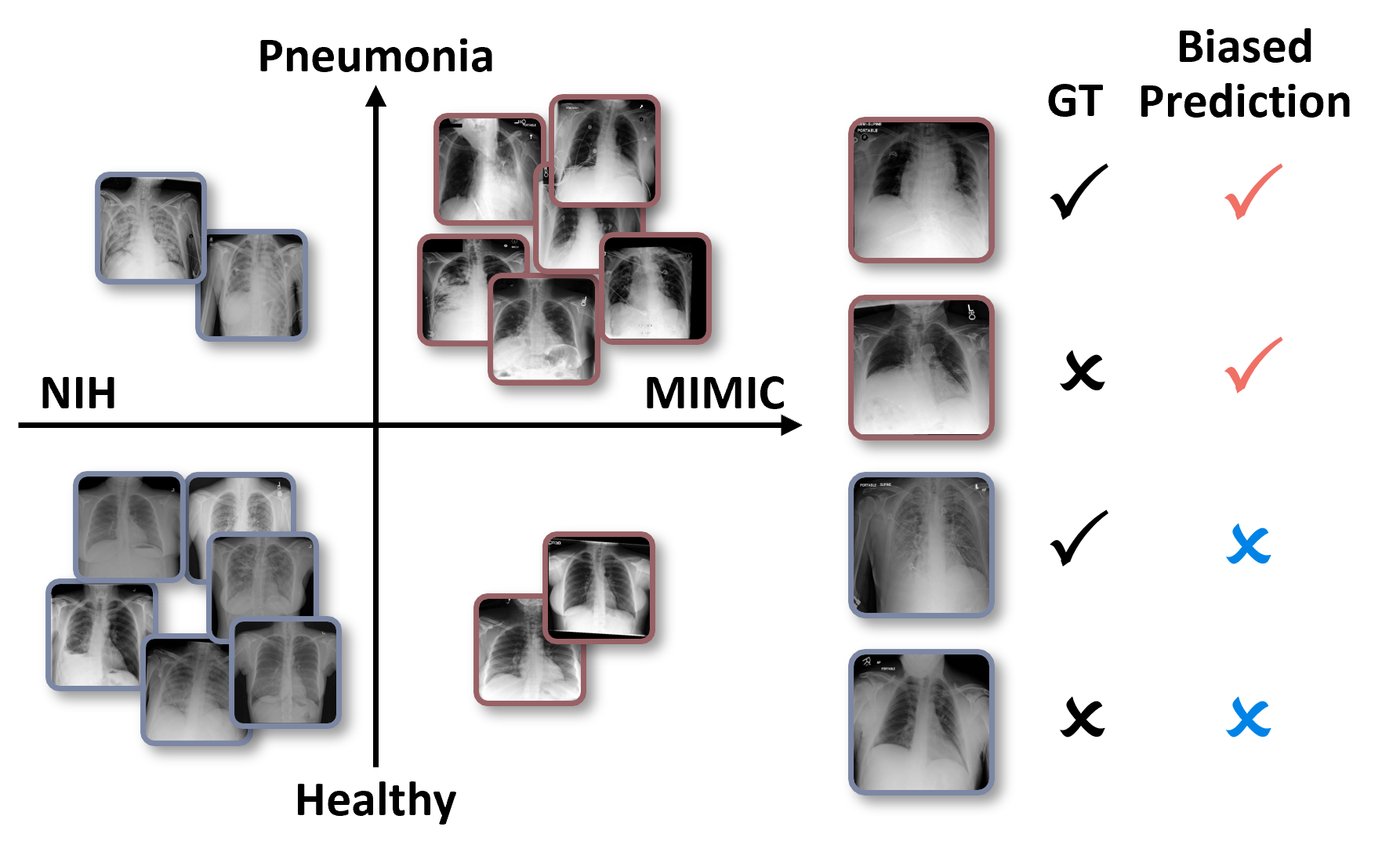}
\end{center}
   \caption{Dataset bias in medical image classification could lead to inaccurate and untrustworthy results. Here, the source of data and whether the patient contains pneumonia are spuriously correlated. A biased model would make decisions based on the data source while ignoring the patterns of the lesions. Our goal is to learn a robust model that can make bias-invariant decisions from the biased training set.}
\label{fig:teaser}
\end{figure}

Specifically, dataset biases, or shortcuts, are often referred to the features that spuriously correlate to the target patterns.
Previous works argue that such features would be preferred by the deep learning models as they are much easier to learn \cite{luo2022pseudo,nam2020learning,shah2020pitfalls}.
In particular, the training of stochastic gradient descent (SGD) tend to find simple solutions, which is also called simplicity bias by some and deemed one of the reasons why deep neural networks generalize well \cite{kalimeris2019sgd,hermann2020shapes,teney2022evading}. 
However, such a preference can also prevent models from learning more complex patterns, providing a precondition for models to use shortcuts to quickly fit the entire training data \cite{shah2020pitfalls,geirhos2020shortcut,nam2020learning}.
Simplicity bias is especially harmful when non-causal factors have spurious correlations with the target patterns.
For example, a convolutional neural network may identify pneumothorax patients based on the patterns of the chest drain \cite{oakden2020hidden,rueckel2020impact}, a common treatment to remove air or fluid from the pleural space.
Compared with the complex and ambiguous signs of pneumothorax, chest drains show clearer patterns on the radiograph and are more easily recognized.
Similar findings have been reported on the preference of learning data source over thoracic diseases (as shown in Fig. \ref{fig:teaser}) \cite{luo2022pseudo}, gender signs over pneumothorax \cite{larrazabal2020gender}, or even laterality markers \footnote{The laterality marker is a sign of "L" or "R" put on a chest radiograph to indicate the side of a patient.} over COVID-19 lesions \cite{degrave2021ai}.
Despite being reported frequently, there are few benchmarks and solutions on mitigating dataset biases in the medical image domain.

Re-collecting data could remove dataset biases, which is conceptually simple but practically infeasible \cite{rouzrokh2022mitigating}.
Therefore, many studies attempted to learn a robust model by up-weighting the minority group of samples (\eg, pneumothorax cases without chest drains) \cite{li2019repair,sagawa2020distributionally}.
Another broad direction proposes learning invariant representations across different environments \cite{arjovsky2019invariant,zhang2021quantifying,zhou2022sparse,tartaglione2021end,zhu2021learning}.
These methods replace the need for data collection with algorithmic solutions using explicit labels of the dataset bias, which are still less practical as the dataset bias is often unknown until careful evaluation and interpretation of the trained model \cite{bluemke2020assessing,degrave2021ai}, and explicitly labeling the dataset bias is tedious and expertise-dependent.

More recent studies explored alleviating shortcut learning without explicit dataset bias labels \cite{sohoni2020no,nam2020learning,liu2021just,lee2021learning,kim2021biaswap,luo2022pseudo}, which can be roughly categorized into two-stage methods and one-stage methods.
The two-stage methods first capture and predict the bias information and then develop group-robust learning models based on the predicted bias information \cite{sohoni2020no,liu2021just,kim2021biaswap,luo2022pseudo}.
However, these approaches are sensitive to the convergence of biased models and could bring in noise during the group-robust learning process.
In contrast, the one-stage methods typically develop debiasing models by comparing its loss with that of a simultaneously trained biased model \cite{nam2020learning,lee2021learning}.
Nevertheless, these approaches are mainly based on heuristic loss weighting functions, which could lead to over-weighing of the samples without spurious correlations and prevent the model from learning the target patterns.
Taking binary classification as an example, let two binary variables $t$ and $b$ represent the target feature and bias feature, respectively.
A biased model could majorly learn from the samples with spurious correlation, \eg, $t=b$, and make decisions according to $b$.
When restricted to only learning from the data without spurious correlation, the debiasing model could probably learn another biased decision, \eg, $t\neq b$, and still make decisions according to $b$.
In this sense, heuristically up-weighing the samples without spurious correlation could be harmful, and a good debias model should learn from both types of samples.

To this end, this paper proposes \textbf{Ada}ptive \textbf{A}greement from \textbf{B}iased \textbf{C}ouncil, a one-stage algorithm that debiases via balancing the learning of agreement and disagreement from the guidance of a biased model.
Specifically, a biased model was trained with the general cross entropy loss which helps capture shortcuts by encouraging the model to learn easier samples.
To foster the bias learning ability, we introduced the bias council, an ensemble of classifiers learned from diversified training subsets.
To learn a debiasing model, instead of using heuristic loss weighting functions, we proposed an adaptive agreement objective by requiring the model to agree with the correct decisions and disagree with the wrong decisions made by the biased model.
Essentially, the right or wrong decisions by the biased model indicated how likely the samples were with or without spurious correlations.
Hence, learning agreement prevented the debias model from ignoring largely the rich information contained in samples with spurious correlations, and learning disagreement further drove the model to learn a different minimal via the samples without spurious correlations.
Ada-ABC could then be derived by training simultaneously the bias council and the debiasing model.
Further, we provided theoretic analysis to demonstrate that the adaptive agreement loss enforced the debiasing model to learn different features from those captured by the biased model.
To demonstrate the effectiveness of our proposed Ada-ABC on mitigating dataset biases in medical images, we carried out extensive experiments under seven different scenarios on four medical image datasets with various dataset biases.
We highlight our main contributions as follows:
\begin{itemize}
    \item We proposed Ada-ABC, a novel one-stage bias label-agnostic framework that alleviates dataset bias in medical image classification.
    \item We demonstrated theoretically that with our proposed algorithm, the debiasing model could learn the target feature when the biased model captures the bias information.
    \item To our best knowledge, we provided the first medical debiasing benchmark with four datasets under seven different scenarios covering various medical dataset biases.
    \item We validated the effectiveness of our proposed Ada-ABC in alleviating medical dataset biases under various situations based on the benchmark.
\end{itemize}

\section{Related Works}

\subsection{Dataset Bias in Medical Images}
There are many studies reported that deep learning models prefer bias information other than targeted patterns in the domain of medical image analysis.
Taking chest X-ray (CXR), the commonest medical imaging, as an example, Zech \etal discovered that CNN generalized poorly on the testing set from external sources (\ie, a different hospital).
Luo \etal \cite{luo2022rethinking} showed that classification models could learn pattern other than disease signs with quantitative analysis.
Viviano \etal \cite{viviano2020saliency} further found that CNNs could learn unwanted features outside the lungs even when restricted to learning from thoracic disease masks.
More specifically, some consistent but medically irrelevant patterns have been identified.
Chest drains, a common treatment for pneumothorax, were found to be used to identify the disease condition \cite{oakden2020hidden}.
In the study by Degrave \etal \cite{degrave2021ai}, laterality marker was found to be an evidence for the model to recognize COVID-19 signs.
Recent studies further found that imbalance of gender \cite{larrazabal2020gender}, race \cite{gichoya2022ai}, and even socioeconomic \cite{seyyed2020chexclusion} could also cause unfairness in deep learning models for under-represented groups \cite{seyyed2021underdiagnosis}.
Moreover, biases could also exist when applying deep learning models to other medical imaging domains, such as mammography \cite{zufiria2022analysis} and magnetic resonance imaging \cite{zhao2020training}.
Despite many reports of dataset biases, works on combating dataset bias in medical image classification are still scarce.
We deem that one of the main reasons is the lack of benchmarks with at least bias labels in the testing set.
In this paper, we provide a medical debiasing benchmark with four datasets under seven different scenarios with various dataset biases.

\subsection{Deep Debiased Learning}
There has been an increasing interest in developing debiased models in both the natural image domain and the medical image domain.
We here broadly categorize the related works in the following two.

\textbf{Methods using bias labels.}
A broad branch of work uses resampling or reweighting strategies to robustly learn representations for both the majority and the minority groups.
Li \etal \cite{li2019repair} proposed a minimax algorithm to automatically learning resampling weights over the training samples.
Sagawa \etal \cite{sagawa2020distributionally} proposed group distributional robust optimization (G-DRO) to prioritize the learning on worst-performing groups.
Another type of studies emphasizes learning invariant representations. 
Arjovsky \etal \cite{arjovsky2019invariant} proposed invariant risk minimization (IRM) to enforce learning invariant representations across different environment.
Zhou \etal \cite{zhou2022sparse} further impose sparsity regularization into IRM to alleviate the overfitting problem caused by overparameterization.
Similarly, contrastive learning \cite{tartaglione2021end} and mutual information minimization \cite{zhu2021learning} have also been utilized to learn more compact and invariant features across different environments.
In this paper, we study more practical situations where the biases are not explicitly labeled.
We will also show that our proposed method even achieved comparable results to the approach that used the bias labels.

\textbf{Methods without bias labels.}
Labeling biases could be tedious, and finding biases might rely on post-hoc interpretation of the model \cite{degrave2021ai}.
Efforts have also been devoted to developing debiased models without explicit bias labels.
Two-stage methods often estimate the bias distribution first and then develop debiasing model with the estimated bias information.
Sohoni \etal \cite{sohoni2020no} estimated the bias information via clustering techniques and then debiasing with G-DRO.
Liu \etal \cite{liu2021just} proposed a simple yet effective twice-training strategy that first learns an ERM model and then develops a debiasing model based on the sampling weights given by the ERM model.
Luo \etal \cite{luo2022pseudo} estimated the Bayesian distribution of the biases and target labels, and then adopted bias-balanced learning \cite{hong2021unbiased} for the second-stage debiasing training.
Nevertheless, these methods were highly sensitive to the convergence of the biased model, and wrong bias predictions could introduce much noise into the second stage.

One-stage approaches typically develop the biased and debiasing models simultaneously.
Nam \etal \cite{nam2020learning} proposed a heuristic loss weighting strategy, where a debiased model was learned by comparing its loss with another simultaneously trained biased model.
Based on this scheme, Lee \etal \cite{lee2021learning} further introduced feature augmentation by swapping the features between the two networks,
Kim \etal \cite{kim2022learning} proposed to pre-train the model first, and then use an ensemble of biased models to stabilize the learning of bias information.
The debiasing objective is then a cross entropy loss inversely weighted by the number of correct predictions given by the biased models.
However, the heuristic loss functions may over-weigh the samples without spurious correlations and insufficiently utilize the rich information contained in the majority groups.
On the contrary, our proposed Ada-ABC could assist in balancing the learning of different samples, thus outperforming other competitive algorithms.

\section{Methodology}

\subsection{Problem Setup}
Generally, a sample data $x$ could contain different attributes, \eg, whether a chest X-ray contains pneumothorax, whether the patient is male, whether a chest drain is applied, etc.
Let $(t, b)$ be the pair of (possibly latent) target and bias attributes, the values of $t$ and $b$ are binary, where 0 and 1 represent whether the attribute is absent or present, respectively.
Specifically, $t$ is used for labeling the dataset, and $b$ may not be recorded due to limited labeling budget or privacy reasons.
We can obtain a dataset $\mathcal{D} = \{(\mathit{x}_{1}, \mathit{y}_{1}), (\mathit{x}_{2}, \mathit{y}_{2}), \cdots\}$, where $y_{i}$ represents the label of $x_i$.
As the data is labeled according to the target attribute, we have $y_{i}=t_{i}$.
The dataset is biased as the bias attribute is spuriously correlated to the target attribute, \ie, $b_{i}=y_{i}$ for most of the samples.
Therefore, $b$ can be almost as predictive as $t$.
We define a sample with spurious correlation if $b_{i}=y_{i}$ or without spurious correlation if $b_{i}\neq y_{i}$.
Following previous works \cite{nam2020learning,lee2021learning,luo2022pseudo}, we strictly focus on the situations where the dataset bias is known to exist while not explicitly labeled.
Our main goal is to develop debiasing models that use the target attribute instead of the bias attribute for making decisions.

To this end, we propose a one-stage debiasing algorithm, Adaptive Agreement from Biased Council (Ada-ABC), to learn a debiased model without explicit labeling of the bias attributes.
As depicted in Fig. \ref{fig:framework}, Ada-ABC trains two networks simultaneously.
A model $f_{\theta}$ will be trained with empirical risk minimization (ERM) to learn the shortcuts as much as possible, where $f$ represents the mapping function of the model and $\theta$ represents the model's parameters.
The other model $f_{\tilde{\theta}}$ will be trained at the same time via learning adaptive agreement from $f_{\theta}$.

\begin{figure*}[t]
\begin{center}
\centering
\includegraphics[width=\linewidth]{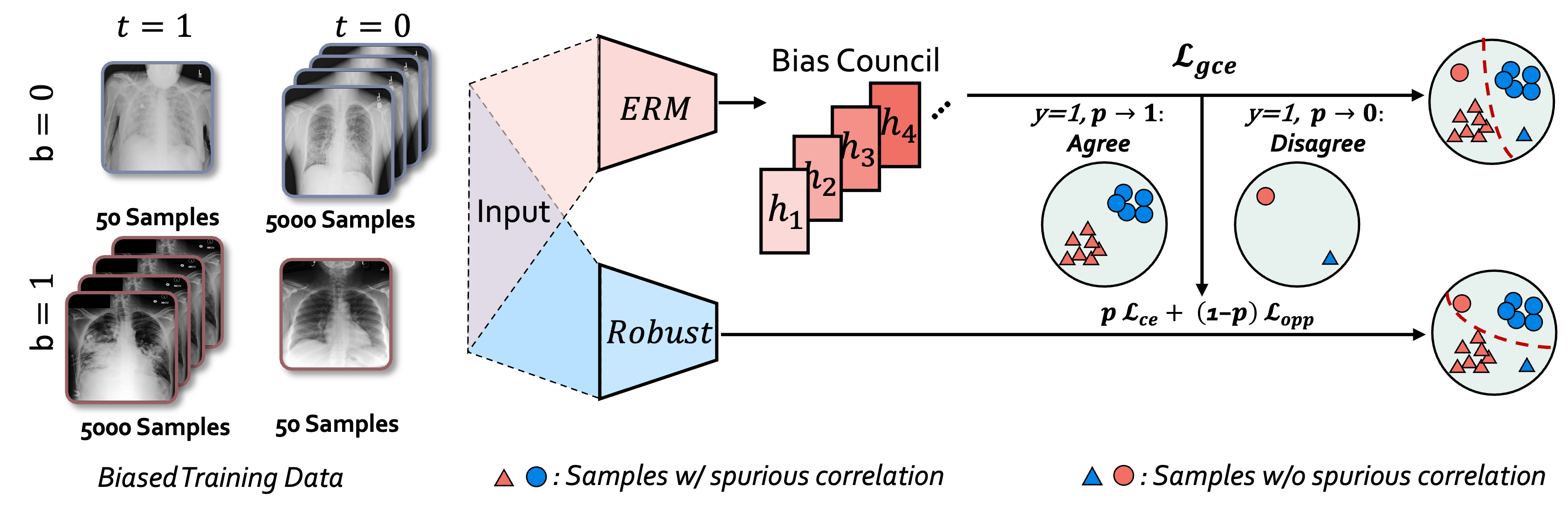}
\end{center}
   \caption{The Framework of Ada-ABC. The goal is to develop a debiasing model which is robust to dataset biases (\eg, caused by spurious correlation between the data source and health condition). A bias council with multiple classification heads is trained with empirical risk minimization, \eg, minimization of generalized cross entropy loss. A second model is simultaneously trained and required to agree with the correct predictions made by the ERM model and disagree with the wrong predictions. Under such an adaptive agreement learning scheme, a different decision-making rule can be learned from the samples w/o spurious correlations, while rich information from the samples w/ spurious correlation can be preserved as well.}
\label{fig:framework}
\end{figure*}

\subsection{Learning Adaptive Agreement}

Essentially, deep neural networks optimized by empirical risk minimization (\eg, minimization of the cross entropy loss) prioritize learning from the simple patterns \cite{arpit2017closer}.
Consequently, when an easy bias feature (\eg, chest drain) is spuriously correlated with a harder target feature (\eg, pneumothorax sign), a model optimized by empirical risk minimization prefers learning the biases \cite{nam2020learning,liu2021just,luo2022pseudo}.
Hence, samples with spurious correlation could be more correctly classified by $f_{\theta}$, while those without spurious correlation tend to be misclassified more frequently.
Our key motivation is that, if $f_{\theta}$ is well trained, it would be sufficient to learn a debiasing model by letting $f_{\tilde{\theta}}$ learn to \textit{agree} with the right decisions and \textit{disagree} with the wrong decisions made by $f_{\theta}$.

For simplicity, let $p$ and $\tilde{p}$ be the prediction by $f_{\theta,y=1}(x)$ and $f_{\tilde{\theta},y=1}(x)$, respectively.
To learn agreement, we can set $f_{\tilde{\theta}}$ to be also optimized by the cross entropy loss:
\begin{equation}
    \mathcal{L}_{\rm ce} = -\log\tilde{p}.
\label{loss:ce}
\end{equation}
As aforementioned, minimizing $\mathcal{L}_{\rm ce}$ here minimizes the empirical risk over the entire training set.
To achieve disagreement, intuitively, $f_{\tilde{\theta}}$ should make opposite predictions to $f_{\theta}$, \ie, $\tilde{p}$ should tend to 0 or 1 when $p$ approaches to 1 or 0, respectively.
Motivated by \cite{pagliardini2022agree}, we implement the following loss to drive $f_{\tilde{\theta}}$ to make an opposite prediction to $f_{\theta}$:
\begin{equation}
    \mathcal{L}_{\rm opp} = -\log(\tilde{p}(1-p)+p(1-\tilde{p})+\epsilon),
\label{loss:opposite} 
\end{equation}
where $\epsilon$ is a small value for numerical stabilization. 

While bias labels are not available, one of our main challenge here is \textit{when} to learn agreement or disagreement, which is essentially the question of when the biased model would make right or wrong decisions.
Particularly, the prediction by $f_{\theta}$ reveals whether a sample has spurious correlation, as $f_{\theta}$ is learned with ERM and can be used as an indicator for the training of the debiasing model.
A large or small value of $p$ indicates that the sample has a high potential for exhibiting spurious correlation or not.
In this way, we propose the following adaptive agreement learning loss:
\begin{equation}
\begin{split}
    \mathcal{L}_{\rm ad} 
    &= p\mathcal{L}_{ce} + (1-p)\mathcal{L}_{\rm opp} \\
    &= \mathcal{L}_{\rm agr} + \mathcal{L}_{\rm dis}.
\end{split}
\label{loss:ad}
\end{equation}

In the above, $p$ adaptively assigns different weights for the samples, where more agreement learning will be put on the samples with spurious correlations, and more disagreement will be put on the samples without spurious correlations.
In this way, $\mathcal{L}_{\rm ad}$ can be applied to all samples with $p$ adjusting the learning of agreement and disagreement.

Importantly, when the ERM model successfully captures the dataset bias $b$, it can be shown that $f_{\tilde{\theta}}$ would be driven to learn the patterns of $t$ by the following.

\begin{theorem}
{\rm (Eq. \ref{loss:ad} encourages learning the target pattern.)}
\textit{Given a joint data distribution $\mathcal{D}$ of triplets of random variables $(T, B, Y)$ taking values into $\{0,\ 1\}^3$, where $T$ represents the target feature and $B$ represents the bias feature. Assuming that an ERM model learned the posterior distribution $\mathbb P_1(Y=1|T=t,\ B=b)=b$, meaning that it is invariant to feature $t$. Then, the posterior solving $\mathcal{L}_{ad}$ objective will be $\mathbb P_2(Y=1|T=t,\ B=b)=t$, invariant to feature $b$.}
\label{Theorem:adaptive agreement learning}
\end{theorem}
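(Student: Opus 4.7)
The plan is to set up the expected minimization of $\mathcal{L}_{ad}$ conditionally on each configuration $(T=t,B=b)$ and show that the pointwise posterior minimizer is $\tilde{p}^{*}(t,b)=t$. First I would invoke the problem setup that labels match the target, so that $\mathbb{E}[Y\mid T=t,B=b]=t$; combined with the theorem's assumption $p(t,b)=b\in\{0,1\}$, this forces $p$ to be exactly $0$ or $1$ for every configuration. Consequently the convex combination $\mathcal{L}_{ad}=p\,\mathcal{L}_{ce}+(1-p)\,\mathcal{L}_{opp}$ collapses into a single active term per configuration, reducing the problem to a one-dimensional optimization on $\tilde{p}\in[0,1]$.

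Next I would execute a case analysis over the four values of $(t,b)$. When $b=t$ (the spurious-correlation regime), the biased prediction coincides with the true label, the agreement weight saturates at $1$, and only the cross-entropy term survives; its expected minimizer is the conditional mean $\mathbb{E}[Y\mid T=t,B=b]=t$, as desired. When $b\neq t$ (the no-spurious-correlation regime), $p$ is opposite to the true label, the disagreement weight saturates at $1$, and only $\mathcal{L}_{opp}=-\log\bigl(\tilde{p}(1-p)+p(1-\tilde{p})+\epsilon\bigr)$ is active. Because $p\in\{0,1\}$, the log-argument is affine and strictly monotone in $\tilde{p}$, so its unique minimizer on $[0,1]$ is the endpoint $1-p$, which under $b\neq t$ equals $t$. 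Assembling the four sub-cases yields $\tilde{p}^{*}(t,b)=t$ uniformly, i.e.\ $\mathbb{P}_{2}(Y=1\mid T=t,B=b)=t$, and this posterior is independent of $b$, the claimed invariance.

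The main obstacle I anticipate is the correct interpretation of $\mathcal{L}_{ce}$ for samples with $y=0$: Eq.~\ref{loss:ce} is stated for the $y=1$ branch, so to close the argument symmetrically one must either generalize to the standard binary cross-entropy $-y\log\tilde{p}-(1-y)\log(1-\tilde{p})$ and replace the weight $p$ by the biased model's confidence in the true label $p_{y}=yp+(1-y)(1-p)$, or invoke symmetry of the two classes to halve the work. A secondary technical point is the role of $\epsilon$ inside $\mathcal{L}_{opp}$: one can either pass to the limit $\epsilon\to 0^{+}$ or observe that strict monotonicity already pins the minimizer to the correct endpoint for every sufficiently small $\epsilon>0$, so the regularizer does not alter the conclusion.
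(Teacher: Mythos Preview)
Your proposal is correct and follows essentially the same approach as the paper: split by whether $t=b$ or $t\neq b$, use $p\in\{0,1\}$ to collapse $\mathcal{L}_{ad}$ to a single active term per configuration, and minimize each term to obtain $\tilde p^{*}=t$ in all four cases. Your pointwise-conditional framing is in fact slightly cleaner than the paper's expectation-based version (which introduces a uniformity assumption it then notes is unnecessary), and you correctly anticipate and resolve the notational ambiguity around the $y=0$ branch that the paper leaves implicit.
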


\begin{proof} 
Let $T$, $B$, and $Y$ represent the random variables for the target feature, bias feature, and ground truth label.
The training set is a joint distribution $\mathcal D$ of triplets of $(T,\ B,\ Y)$ taking values in $\{0,\ 1\}^3$.
For simplicity, we further let $\mathop{\mathcal D}_{t=b}$ and $\mathop{\mathcal D}_{t\neq b}$ to be uniform on $\{T, B\}$, but the following still holds if the distribution is not uniform.
In other words, 
\begin{equation}
\begin{split}
            &\mathbb{P}_{\mathcal{D}}(t=0,b=1|t\neq b)=\mathbb{P}_{\mathcal{D}}(t=1,b=0|t\neq b)=1/2, \\
        & \mathbb{P}_{\mathcal{D}}(t=1,b=1|t=b)=\mathbb{P}_{\mathcal{D}}(t=0,b=0|t=b)=1/2.
\end{split}
\end{equation}

Let $\mathbb{P}$ and $\mathbb{\tilde{P}}$ be the learned distribution of the biased model and debiasing model, respectively.
We further assume that the biased model learned the posterior distribution $\mathbb{P}(Y=1|T=t,\ B=b)=b$.

Training the debiasing model would minimize the expectation of $\mathcal L_{ad}$ objective over $\mathcal D$:

\begin{equation}
\begin{split}
    \min &\mathop{\mathbb{E}}\limits_{(t,\ b)\sim \mathcal D}\left[ \mathcal{L}_{\rm agr} + \mathcal{L}_{\rm dis} \right]\\
    &= \mathop{\mathbb{E}}_{(t,\ b)\sim \mathcal D} \left[ p\mathcal L_{ce}  \right]+\mathop{\mathbb{E}}_{(t,\ b)\sim \mathcal D}\left[(1-p)\mathcal{L}_{\rm opp} \right].
\end{split}
\label{formulae: minex}
\end{equation}

By the mentioned conditions,
the biased model successfully captures the bias.
Eq. \ref{formulae: minex} can then be further re-written to:

\begin{equation}
        \min \mathop{\mathbb{E}}_{(t,\ b)\sim \mathcal D\atop t=b}\left[ 
\mathcal L_{ce} \right] + \mathop{\mathbb{E}}_{(t,\ b)\sim \mathcal D\atop t\neq b}\left[\mathcal{L}_{\rm opp} \right],
\label{formula:minex_inference}
\end{equation}

\noindent where the first term in Eq. \ref{formula:minex_inference} is minimized for agreement in the distribution of bias-aligned data, which is the empirical risk minimization over $\mathop{\mathcal{D}}\limits_{t=b}$:
\begin{equation}
\begin{cases}
    &\mathbb{\tilde{P}}(Y=1|t=1,\ b=1)=1,\\
    &\mathbb{\tilde{P}}(Y=1|t=0,\ b=0)=0.
\end{cases}
\label{formula:agr_exp}
\end{equation}
The second term in Eq. \ref{formula:minex_inference} becomes:
\begin{equation}
\begin{split}
 &\mathop{\mathbb{E}}\limits_{(t,\ b)\sim \mathcal D\atop t\neq b}\left[-\log (\Tilde{p}(1-p)+p(1-\Tilde{p})) \right]=\\
&\frac{1}{2}\left[ -\log \left( 1-\mathbb{\tilde{P}}(Y=1|t=0,\ b=1) \right) \right] \\
&+ \frac{1}{2}\left[ -\log \left( \mathbb{\tilde{P}}(Y=1|t=1,\ b=0) \right) \right]   
\end{split}
\end{equation}  

\noindent which is minimized for 
\begin{equation}
\begin{cases}
    &\mathbb{\tilde{P}}(Y=1|t=0,\ b=1)=0,\\
    &\mathbb{\tilde{P}}(Y=1|t=1,\ b=0)=1.
\end{cases}
\label{formula:dis_exp}
\end{equation}

\noindent Combining Eq. \ref{formula:agr_exp} and Eq. \ref{formula:dis_exp}, the posterior learned by the debiasing model according to our proposed adaptive agreement learning loss will be:
\begin{equation}
    \mathbb{\tilde{P}}(Y=1|T=t,\ B=b)=t,
\end{equation}
\noindent which shows that the debiasing model learns the targeted feature invariant to the bias.
\end{proof}

\subsection{Learning the Bias Council}
By Eq. \ref{loss:ad}, the samples on which $f_{\tilde{\theta}}$ should learn agreement or disagreement are decided by $p$.
Then, the following challenge is how to make $p$ successfully indicate the dataset bias information.
In other words, the next goal is to make $f_{\theta}$ as biased as possible.

To this end, we presented the combination of the generalized cross entropy (GCE) loss \cite{zhang2018generalized} and a diversely trained ensemble of classifiers.
The GCE loss was first proposed for learning from noisy labels, where the samples with clean labels are often regarded easier samples than those with noisy labels:
\begin{equation}
    \mathcal{L}_{\rm gce} = \frac{1-f_{\theta}(x)^{q}}{q},
\label{loss:gce}
\end{equation}
where $q\in(0,1]$ is a hyper-parameter balancing the behavior of the loss.
For each class (\ie, the target label is 1), $\mathcal{L}_{\rm gce}$ generates to the mean absolute error loss when $q=1$ and behaves like conventional cross entropy loss when $q\to 1$. This can be seen by its gradient: 
\begin{equation}
\begin{split}    
    \frac{\partial \mathcal{L}_{\rm gce}(f_{\theta}(x))}{\partial \theta} 
    & = -{f_{\theta}(x)}^{q-1}\frac{\partial f_{\theta}(x)}{\partial \theta}\\
    & = {f_{\theta}(x)}^{q}(-f_{\theta}(x)^{-1}\frac{\partial f_{\theta}(x)}{\partial \theta})\\
    %& = {f_{\theta}(x)}^{q}\frac{\partial -{\rm log}(f_{\theta}(x))}{\partial \theta} \\
    & = {f_{\theta}(x)}^{q}\frac{\partial \mathcal{L}_{\rm ce}(f_{\theta}(x))}{\partial \theta},
\end{split}
\end{equation}
where $\mathcal{L}_{\rm ce}(f_{\theta}(x)) = -{\rm log}(f_{\theta}(x))$.
The above shows that the gradient of the GCE loss is a weighted version of the gradient of the cross entropy loss, and the weight is given by its own prediction.
In other words, this loss encourages the model to be confident in its prediction by up-weighing the samples with high predicted probabilities and down-weighing the samples otherwise.
As mentioned, the samples with higher probabilities are highly potentially the easy samples with spurious correlations.
Hence, $f_{\theta}$ would focus on learning from these samples to capture the dataset bias.

To further facilitate robust bias learning, we proposed to train a biased council, which consists of an ensemble of diverse GCE-optimized classifiers. 
Specifically, we introduced a group of classification heads $\{h^{i}_{\phi_{i}}\}_{1}^{n}$ to $f_{\theta}$, where $\phi_{i}$ represents the parameters of $h^{i}$.
The $i$-th head would be trained with a subset $\mathcal{D'}_{i}$ randomly sampled from $\mathcal{D}$.
The parameters were also randomly and independently initialized for each head.
Thus, the increased diversity in the training set and parameters would promote the classifiers to learn a more robust ensemble \cite{nam2021diversity}.
Finally, the prediction by $f$ is set to be the average of the head predictions, \ie, $p=f_{\theta}(x)=\sum_{1}^{n}h^{i}_{\phi_{i}}(z)$, where $z$ is the feature fed to the classifiers.

\subsection{Holistic Training of Ada-ABC}
Combining the above, the training objective for learning adaptive agreement from bias council can be derived as:
\begin{equation}
\mathop{\arg\min}_{\tilde{\theta},\theta}
    \mathcal{L}_{\rm agr}(\tilde{\theta})+\lambda\mathcal{L}_{\rm dis}(\tilde{\theta})+\mathcal{L}_{gce}(\theta),
\label{loss:overall}
\end{equation}
where we add a hyper-parameter $\lambda$ to help balance the learning of agreement and disagreement in case the samples are too imbalanced.
Note that $\mathcal{L}_{\rm agr}$ and $\mathcal{L}_{\rm dis}$ will not be back-propagated to $f_{\theta}$ to avoid influence from the gradient of $f_{\tilde{\theta}}$.
The above loss terms can be optimized simultaneously and need not the convergence of a biased model as a first step.

Algorithm \ref{Algorithm:Ada-ABC Training} details the one-stage training process.
We emphasize that Ada-ABC does not require any explicit bias labels and only requires the knowledge that a training set is biased.
Furthermore, with the convergence of the biased model, the proposed adaptive learning scheme enables the debiasing model to learn a different feature from the wrongly predicted samples and also keeps the rich knowledge from sufficient samples with spurious correlations.

\begin{algorithm}[ht]
    \caption{Ada-ABC Training}\label{Algorithm:Ada-ABC Training}
    
    \hspace*{\algorithmicindent} \textbf{Input:} Dataset $\mathcal{D}=\{\mathcal{X}, \mathcal{Y}\}$; Parameters of biased model $\theta$; Parameters of debiased model $\tilde{\theta}$; Hyper-parameter $\lambda$.\\    
    \hspace*{\algorithmicindent} 
    \textbf{Output:} Debiased model $f_{\tilde{\theta}}$.
    \begin{algorithmic}[1]
    \State{Initialize parameters $\theta$, and $\tilde{\theta}$.}
    \While{not converge} 
        \State{$(X,Y) \sim \mathcal{D}$}
        \State{$P \xleftarrow{} f_{\theta}(X)$}
        \State{$\ell_{\rm agr} \xleftarrow{} \mathcal{L}_{\rm agr}(\tilde{\theta};X,Y,P)$} \hfill $\triangleright$ (\ref{loss:ce}),(\ref{loss:ad})
        \State{$\ell_{\rm dis} \xleftarrow{} \mathcal{L}_{\rm dis}(\tilde{\theta};X,Y,P)$} \hfill $\triangleright$ (\ref{loss:opposite}),(\ref{loss:ad}) 
        \State{$\ell_{\rm ad} \xleftarrow{} \ell_{\rm agr} + \lambda\ell_{\rm dis}$}  \hfill$\triangleright$ (\ref{loss:ad})
        
        \State{$\tilde{\theta} \xleftarrow{} \tilde{\theta}-\eta\nabla_{\tilde{\theta}}\ell_{\rm ad}$}
        
        \State{$\ell_{\rm gce} \xleftarrow{} \mathcal{L}_{\rm gce}(\theta;X,Y)$}
        \hfill $\triangleright$ (\ref{loss:gce})
        
        \State{$\theta \xleftarrow{} \theta-\eta\nabla_{\theta}\ell_{\rm gce}$}

    \EndWhile

    \end{algorithmic}
\end{algorithm}

\section{Experiments}
\label{experiments}

\subsection{Medical Debiasing Benchmark}
\label{sec:MDB}
We constructed the first medical debiasing benchmark (MBD) with four real-world medical image datasets, including Source-biased Pneumonia classification dataset (SbP), Gender-biased Pneumothorax classification dataset (GbP), Chest Drain-biased Pneumothorax classification dataset (DbP), and the OL3I Dataset, with totally seven different dataset bias scenarios.
Detailed numbers of data used for each dataset can be found in Table \ref{tab:datasets}. In the following, we use $t$ and $b$ to represent the target and bias attributes, respectively.

\subsubsection{Source-biased Pneumonia Classification (SbP)}
Chest X-rays (CXRs) generated from different clinical centers could have distribution shifts caused by factors such as imaging parameters, vendor types, patient cohort differences, etc. \cite{luo2020deep}.
SbP is a pneumonia classification dataset containing most pneumonia cases from MIMIC-CXR \cite{johnson2019mimic} and most healthy cases (no findings) from NIH-CXR \cite{wang2017chestx}.
Here, $t=health\_condition$, and $b=data\_source$.
Further, there are three training sets with the ratios of bias-aligned samples of 99\%, 95\%, and 90\%, respectively.
The three scenarios share the same validation and testing sets, which have uniform distributions on the $t$ and $b$ variables, i.e., containing equal numbers of different groups (with pneumonia or without pneumonia; from NIH or MIMIC-CXR).

\subsubsection{Gender-biased Pneumothorax Classification (GbP)}
Significant performance decreases of a pneumothorax classifier have been witnessed when training with male cases and testing on female cases (and vice versa) \cite{larrazabal2020gender}.
GbP dataset was collected from the NIH-CXR dataset, where $t=health\_condition$ and $b=gender$.
It contains two training sets with most male patients (case 1) and most female patients (case 2), respectively.
The validation and testing sets are with uniform distributions of $t$ and $b$.

\subsubsection{Chest Drain-biased Pneumothorax Classification (DbP)}
Chest drain is a common treatment for pneumothorax and have been reported as a type of dataset biases \cite{rueckel2020impact,oakden2020hidden}.
DbP was collected from the NIH-CXR dataset, and the chest drain labels were provided by \cite{oakden2020hidden}.
In the training set, most pneumothorax cases contain chest drains, and all healthy cases do not contain chest drains.
Here, $t=health\_condition$, and $b=chest\_drain$.

\subsubsection{Age-biased Ischemic Heart Disease Prognosis (OL3I)}

The Opportunistic L3 Ischemic heart disease (OL3I) dataset \cite{zambrano2023opportunistic} provided abdominopelvic computed tomography images at the third lumbar vertebrae (L3) level for opportunistic assessment of ischemic heart disease risk.
In this paper, we predicted the ischemic heart disease risk one year after the examination.
According to \cite{zong2023medfair}, age is spuriously correlated to the one-year risk, where individuals with age larger than 60 is less likely to be healthy and more likely to obtain ischemic heart disease within one year.
In other words, $t=ischemic\_heart\_disease\_risk$, and $b=age$.
We followed the original split of the OL3I dataset.

\begin{table}
\centering
\caption{Detailed number of data in different datasets. $t$ and $b$ represent the target and bias attribute, respectively. The meaning of $t$ and $b$ for each dataset can be found in Sec. \ref{sec:MDB}.}
\resizebox{\linewidth}{!}{ 
\begin{tblr}{
  cells = {c},
  cell{1}{3} = {c=2}{},
  cell{1}{5} = {c=2}{},
  cell{1}{7} = {c=2}{},
  cell{3}{1} = {r=2}{},
  cell{5}{1} = {r=2}{},
  cell{7}{1} = {r=2}{},
  cell{10}{1} = {r=2}{},
  cell{12}{1} = {r=2}{},
  hline{1-2,9,14,17,20} = {-}{},
}
Dataset              & $t$          & Training   &           & Validation &           & Testing    &           \\
SbP                     &                & $b=0$      & $b=1$       & $b=0$      & $b=1$       & $b=0$      & $b=1$       \\
 ($\rho$=99\%) & $1$      & 5,000      & 50        & 200        & 200       & 400        & 400       \\
                     & $0$        & 50         & 5,000     & 200        & 200       & 400        & 400       \\
 ($\rho$=95\%) & $1$      & 5,000      & 250       & 200        & 200       & 400        & 400       \\
                     & $0$        & 250        & 5,000     & 200        & 200       & 400        & 400       \\
 ($\rho$=90\%) & $1$      & 5,000      & 500       & 200        & 200       & 400        & 400       \\
                     & $0$        & 500        & 5,000     & 200        & 200       & 400        & 400       \\
GbP                     &                & $b=0$       & $b=1$    & $b=0$       & $b=1$    & $b=0$       & $b=1$    \\
(Case1)        & $1$   & 800        & 100       & 150        & 150       & 250        & 250       \\
                     & $0$        & 100        & 800       & 150        & 150       & 250        & 250       \\
(Case2)        & $1$   & 100        & 800       & 150        & 150       & 250        & 250       \\
                     & $0$        & 800        & 100       & 150        & 150       & 250        & 250       \\
DbP                     &                & $b=0$   & $b=1$ & $b=0$   & $b=1$ & $b=0$   & $b=1$ \\
              & $1$   & 500        & 50        & 50         & 50        & 100        & 100       \\
                     & $0$        & 0          & 1,000     & 0          & 200       & 0          & 400       \\
OL3I                     &                & $b=0$      & $b=1$       & $b=0$      & $b=1$       & $b=0$      & $b=1$       \\
             & $1$  & 87         & 141       & 13         & 43        & 25         & 141       \\
                     & $0$        & 3,512      & 1,487     & 830        & 417       & 1,060      & 478       \\ 
\end{tblr}
}
\label{tab:datasets}
\end{table}

\subsubsection{Evaluation Metrics}
\label{sec:datasets_and_metrics}

\begin{figure*}[t]
\centering
    \begin{subfigure}{.3\textwidth}
    \centering
      \includegraphics[height=3.2cm]{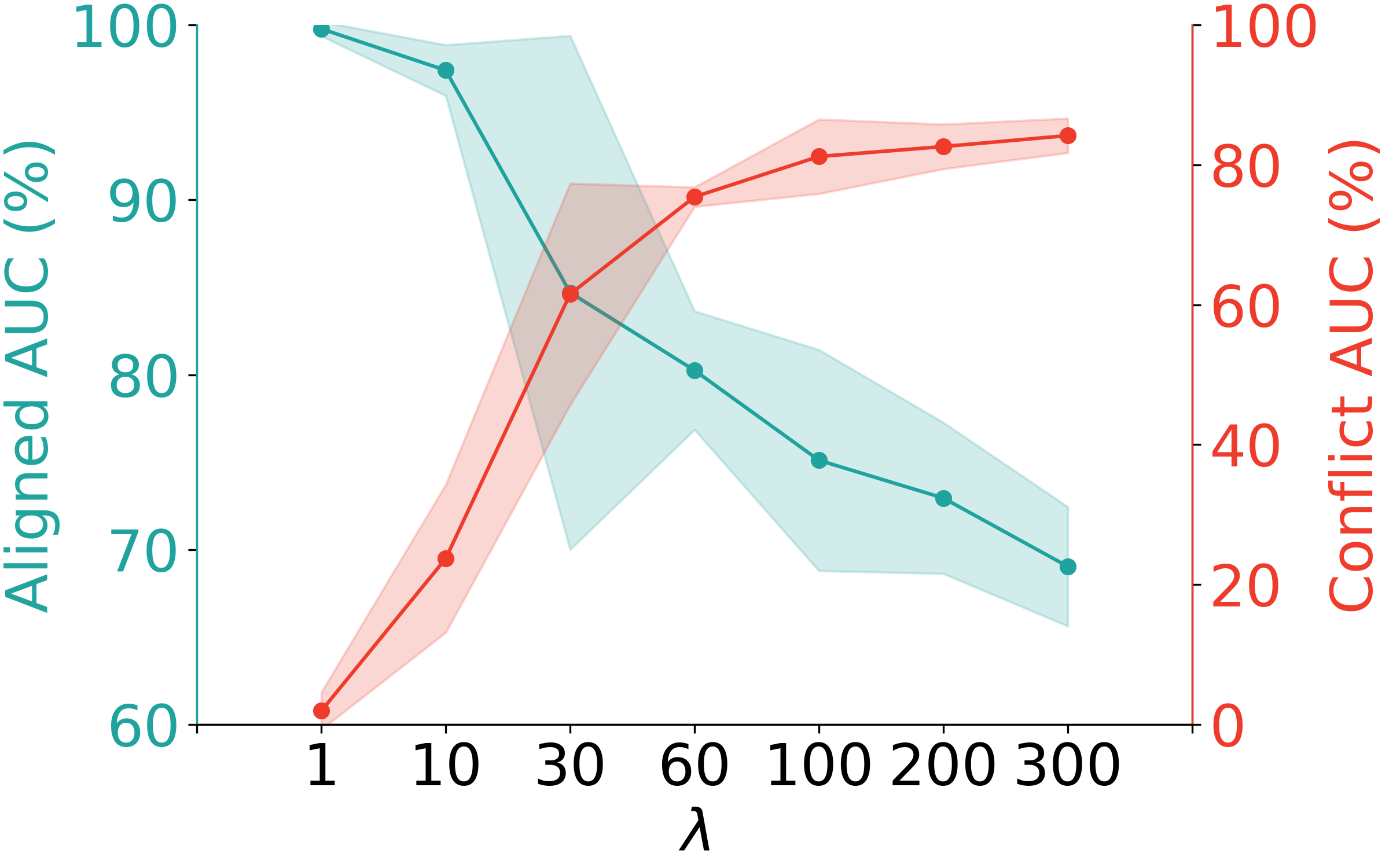}
    \caption{}
    \label{fig:lambda_SbP99_align_conflit}
    \end{subfigure}
    \begin{subfigure}{.34\textwidth}
    \centering
      \includegraphics[height=3.2cm]{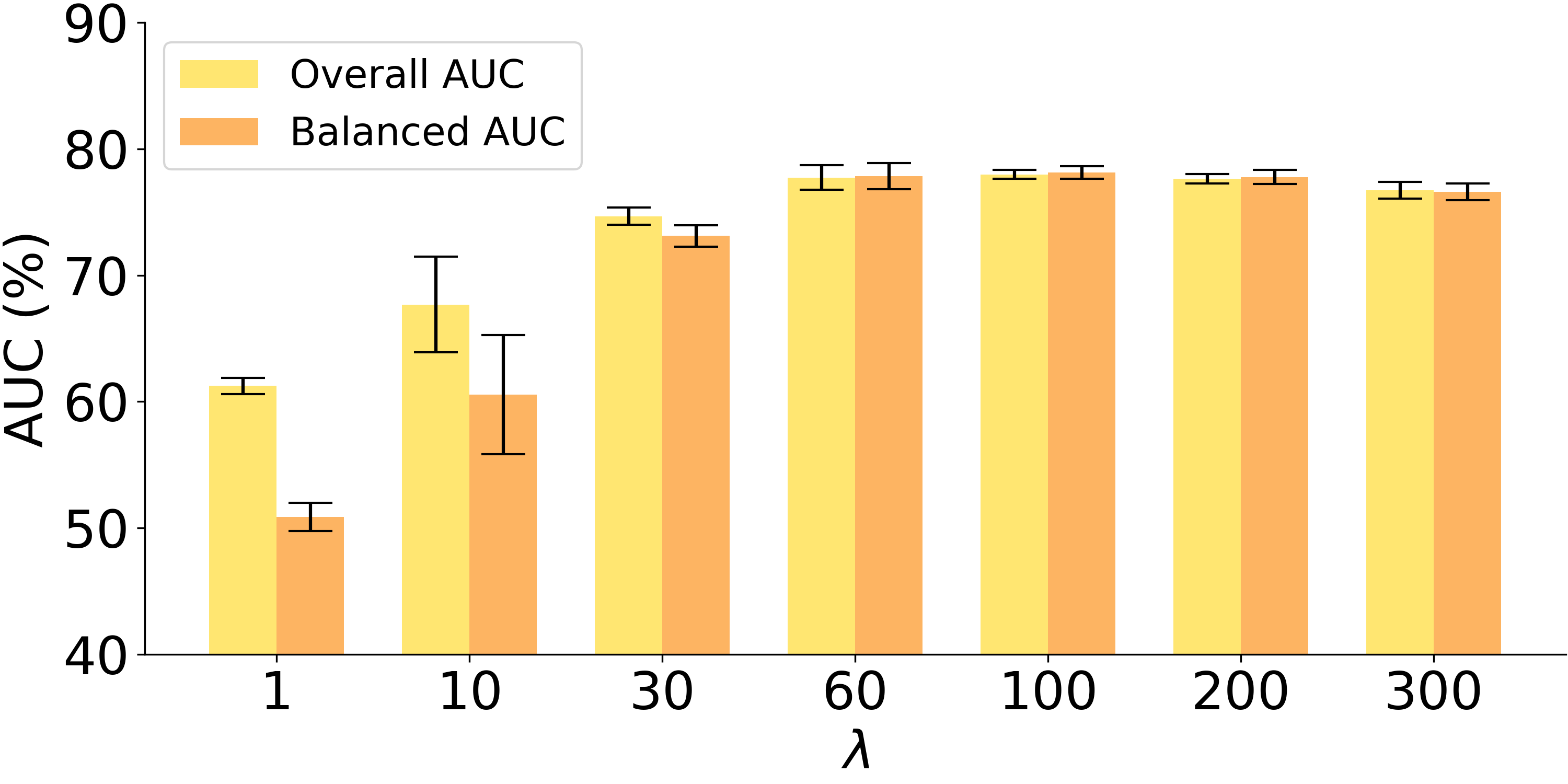}
    \caption{}
    \label{fig:lambda_SbP99_overall_balanced}
    \end{subfigure}
    \begin{subfigure}{.34\textwidth}
    \centering
      \includegraphics[height=3.2cm]{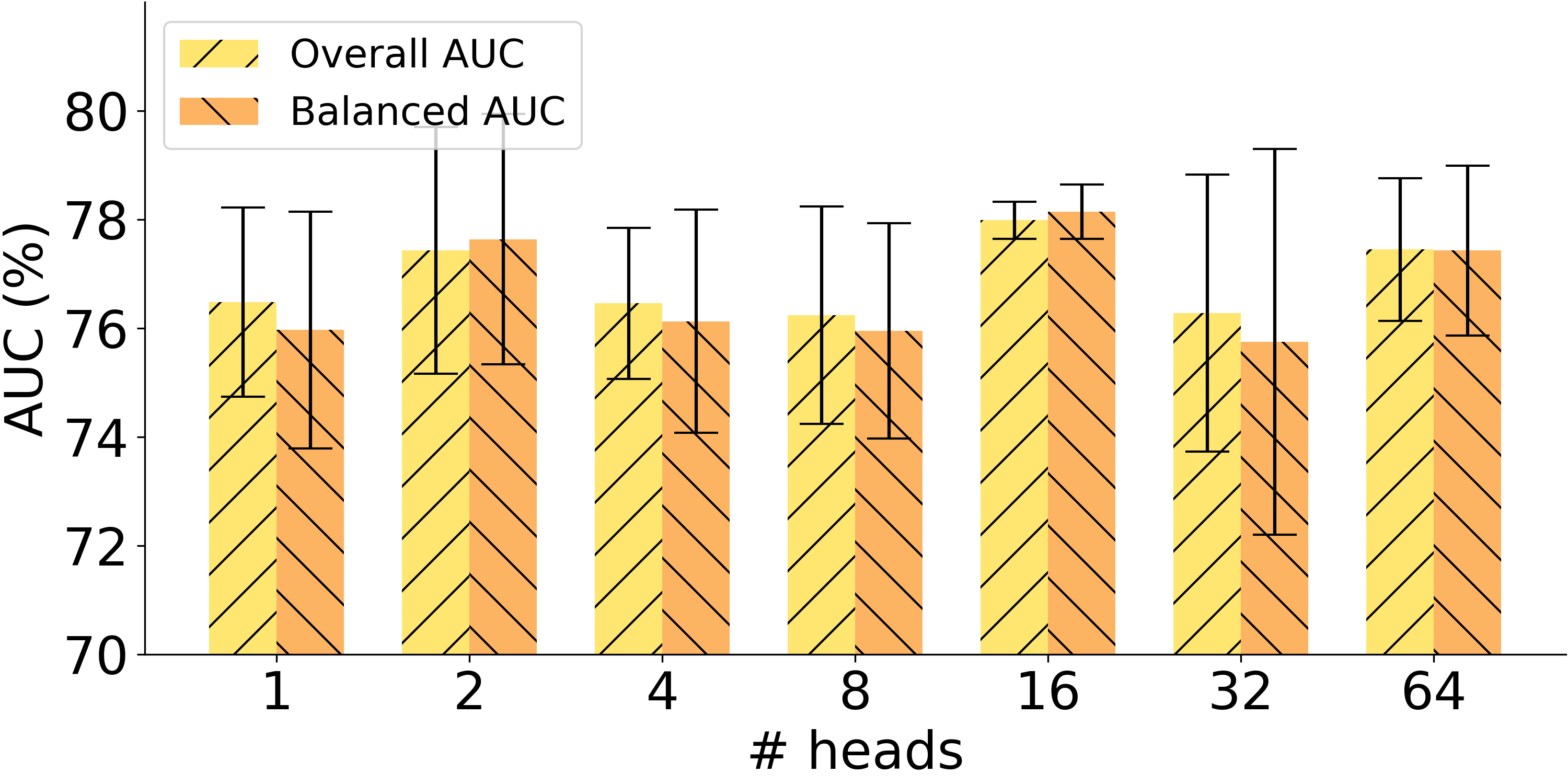}
    \caption{}
    \label{fig:head_SbP99_overall_balanced}
    \end{subfigure}
\\\vspace{1mm}

\centering
    \begin{subfigure}{.3\textwidth}
    \centering
      \includegraphics[height=3.2cm]{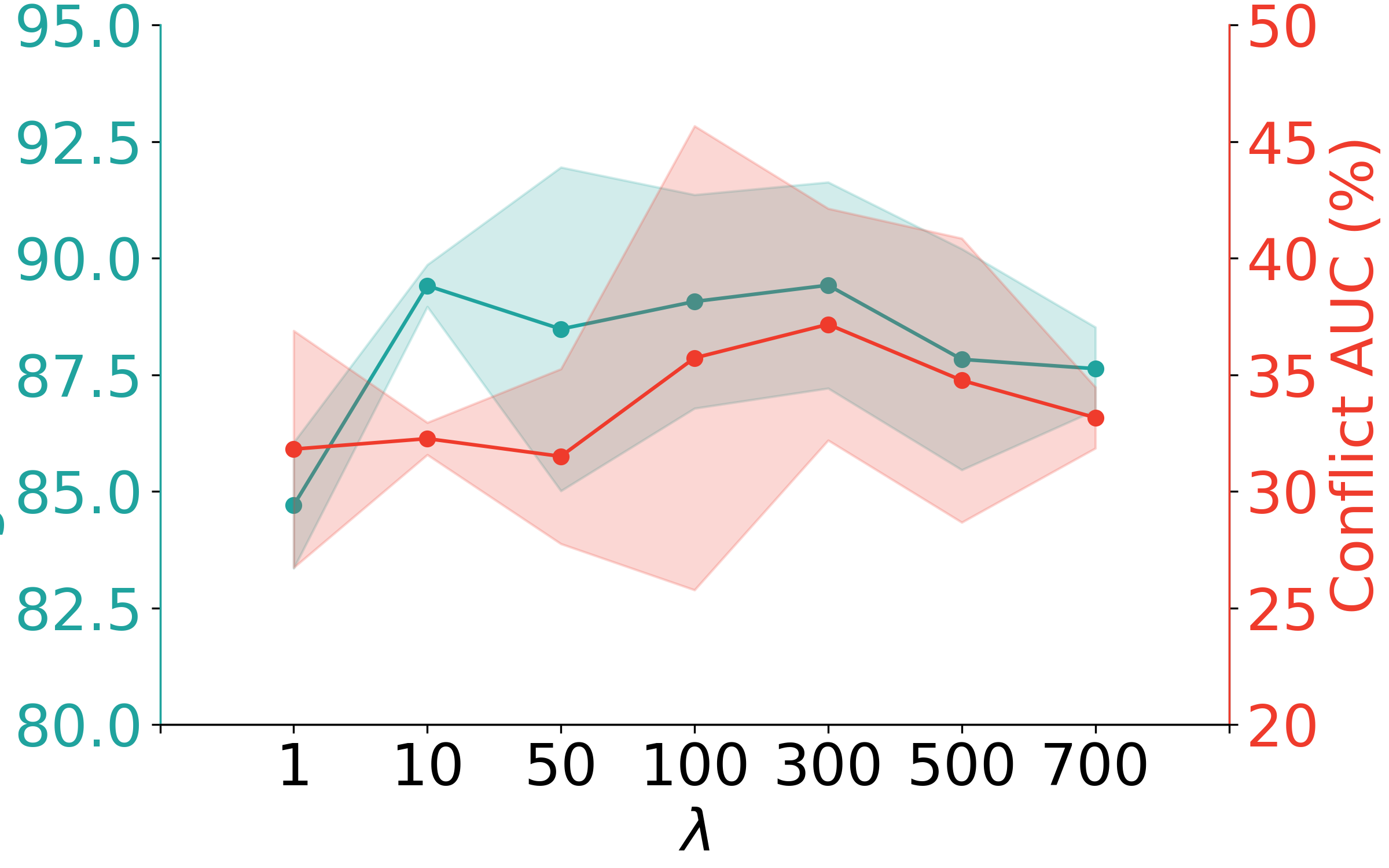}
    \caption{}
    \label{fig:lambda_Ol3I_align_conflit}
    \end{subfigure}
    \begin{subfigure}{.34\textwidth}
    \centering
      \includegraphics[height=3.2cm]{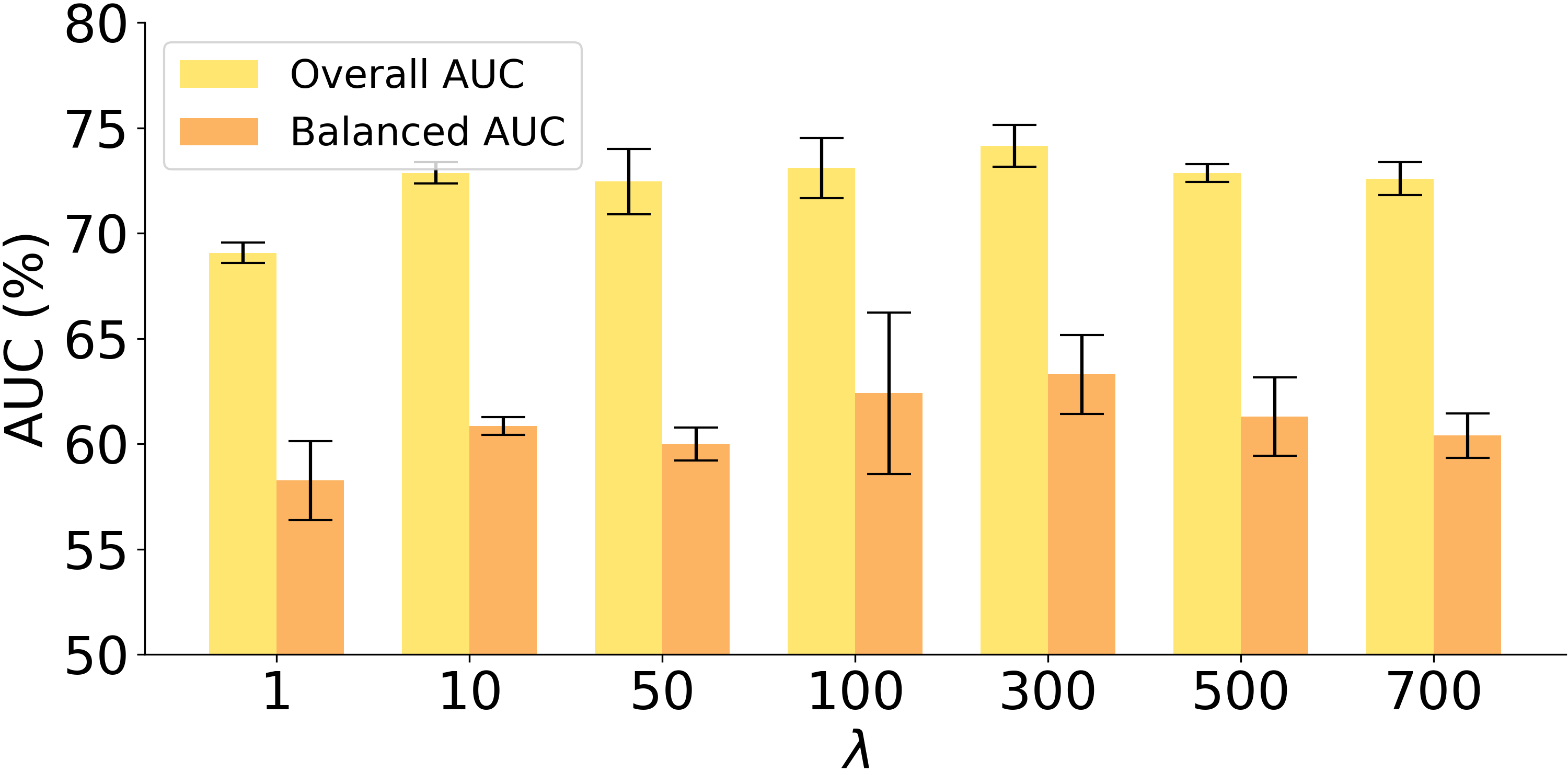}
    \caption{}
    \label{fig:lambda_Ol3I_overall_balanced}
    \end{subfigure}
    \begin{subfigure}{.34\textwidth}
    \centering
      \includegraphics[height=3.2cm]{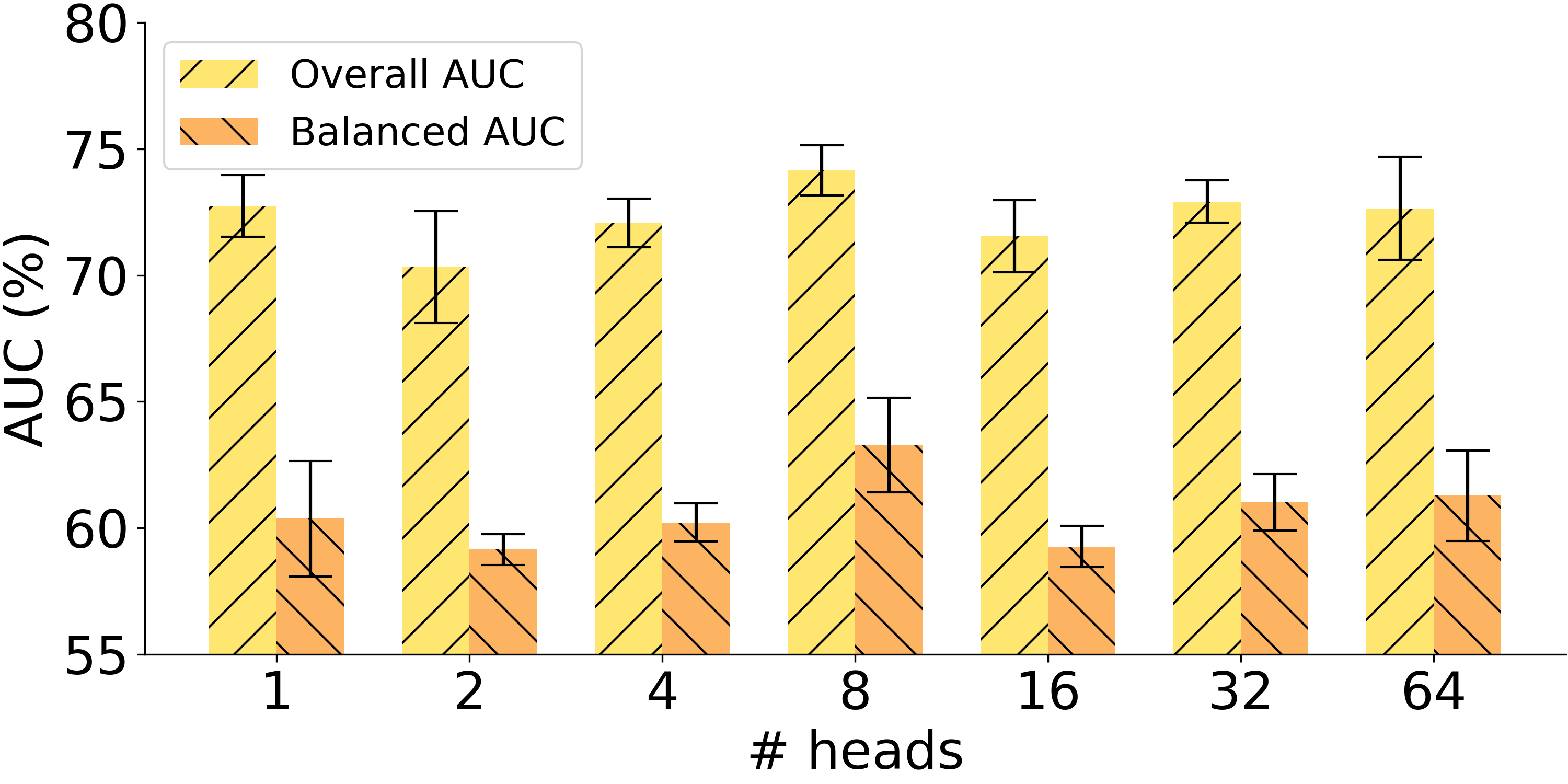}
    \caption{}
    \label{fig:head_Ol3I_overall_balanced}
    \end{subfigure}
    
\caption{Effects of the hyper-parameters $\lambda$ and number of heads. The first row shows the results on SbP dataset with $\rho=99\%$: (a) The changes of aligned AUC and conflicting AUC w.r.t. the change of $\lambda$ (\# heads = 16). (b) The changes of overall AUC and balanced AUC w.r.t. the change of $\lambda$ (\# heads = 16). (c) The changes of overall AUC and balanced AUC w.r.t. the change of number of heads ($\lambda$ = 100). The second row shows the results on OL3I dataset: (d) The changes of aligned AUC and conflicting AUC w.r.t. the change of $\lambda$ (\# heads = 8). (e) The changes of overall AUC and balanced AUC w.r.t. the change of $\lambda$ (\# heads = 8). (f) The changes of overall AUC and balanced AUC w.r.t. the change of number of heads ($\lambda$ = 300).}
\label{fig:Ablation}
\end{figure*}

In the testing phase, following \cite{nam2020learning,lee2021learning,luo2022pseudo}, a sample is called bias-aligned if its attributes are spuriously correlated in the training data, e.g., pneumothorax cases with chest drains.
A sample is called bias-conflicting if it has attributes contradict to the bias-aligned samples, e.g., pneumothorax cases without chest drains.
To observe the debiasing results on different samples, we compute four types of area under the receive operating characteristic curve (AUC):
i) \textbf{bias-aligned AUC}, which is the AUC computed on bias-aligned samples; ii) \textbf{bias-conflicting AUC} computed on the bias-conflicting samples; iii) \textbf{balanced AUC}, which is the average of bias-aligned AUC and bias-conflicting AUC; and iv) \textbf{overall AUC} computed on all samples.

The bias-aligned AUC and the bias-conflicting AUC are mainly used as a reference to tell whether the model is highly biased.
For example, when the bias-aligned AUC is too higher than the bias-conflicting AUC, the model could correctly classify samples with $t=b$ but not sample with $t\neq b$, which means it's highly biased.
As it's important to correctly classify all groups of data, the balanced AUC and overall AUC are used as a fair evaluation for different models.
%As the spurious correlation is broken by balanced sampling of different groups in the validation and testing sets, we can evaluate the effectiveness of the debiasing approaches with the balanced AUC and overall AUC as well.

\subsection{Analysis of Ada-ABC}
\subsubsection{Analysis with a Toy Example}
We first evaluate the effects of different learning schemes with a toy example, where the model is optimized to distinguish the samples ($\triangle$ vs. $\square$) according to their coordinates.
Fig. \ref{ERM} shows the decision boundary of a vanilla model optimized by empirical risk minimization, where the minority groups of samples are misclassified.
By learning purely agreement, \ie, $\mathcal{L}_{\rm agr}$, a second model could learn a similar decision boundary, as shown in Fig. \ref{Agree}.
Notably, learning purely disagreement, \ie, $\mathcal{L}_{\rm dis}$, a second model could generate the exactly opposite decisions to the vanilla model.
Finally, we trained a debiasing model using our proposed adaptive agreement learning in Eq. \ref{loss:ad} with $\lambda$ set to 1, and a correct decision boundary could be achieved as shown in Fig. \ref{AD}.

\begin{figure}[t]
\centering
\begin{subfigure}{.11\textwidth}
  \centering
  \includegraphics[width=\linewidth]{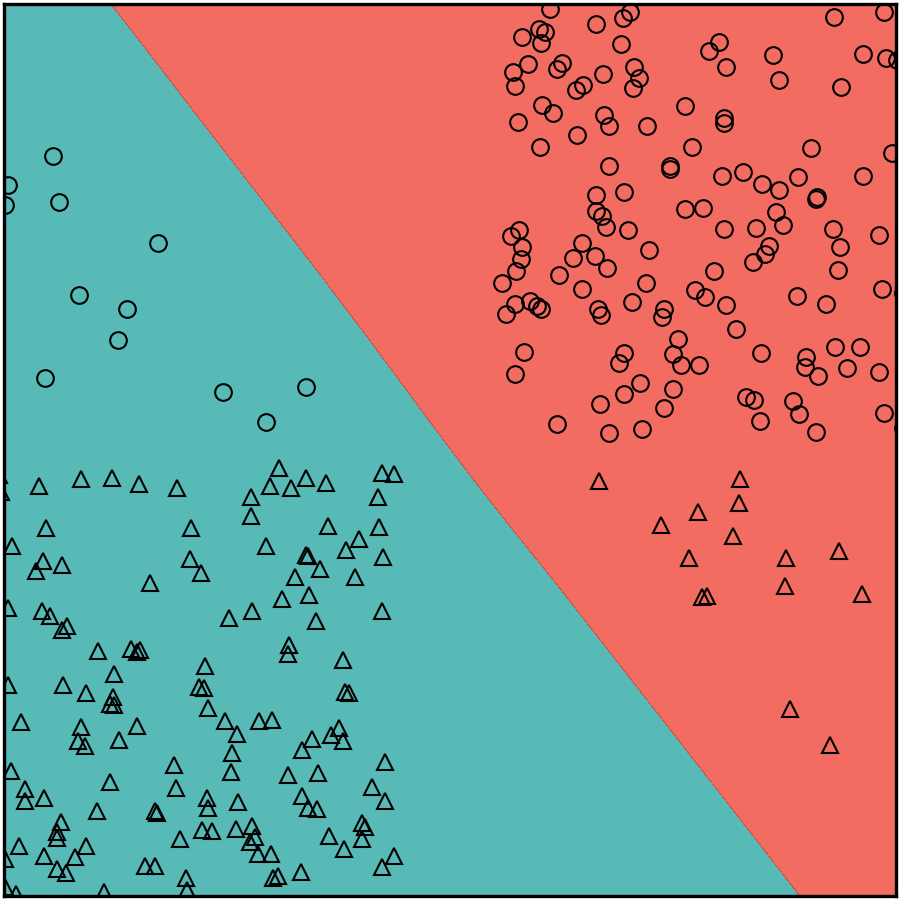}
  \caption{}
  \label{ERM}
\end{subfigure}
\begin{subfigure}{.11\textwidth}
  \centering
  \includegraphics[width=\linewidth]{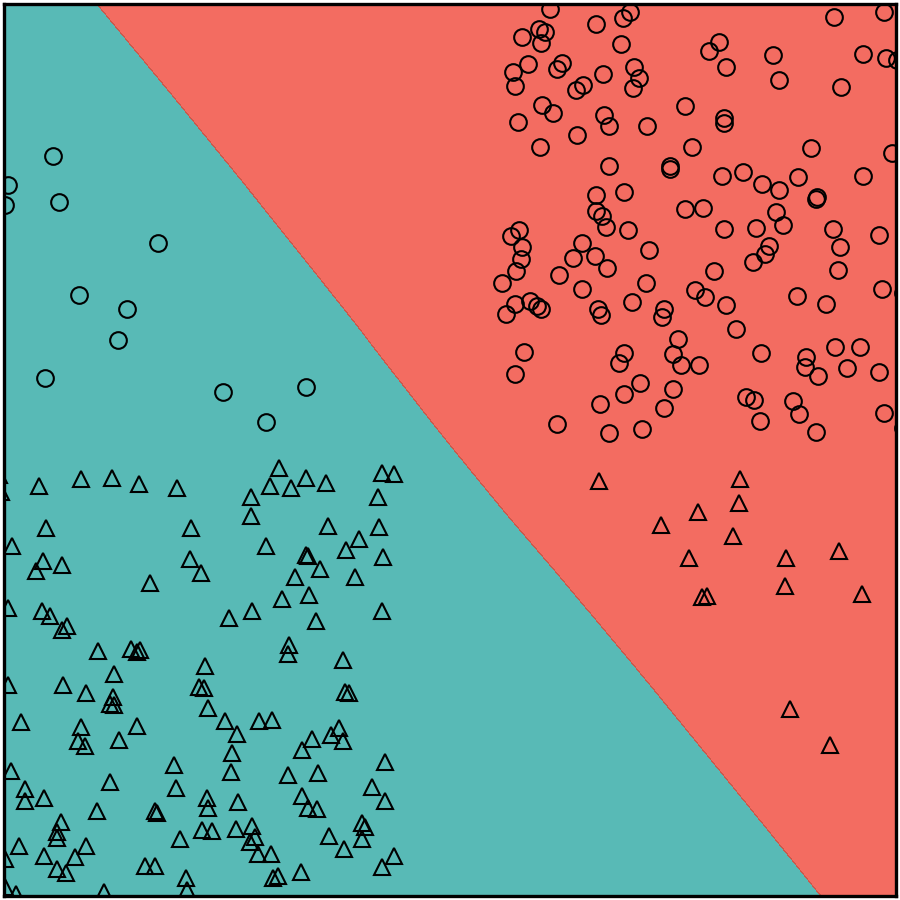}
  \caption{}
  \label{Agree}
\end{subfigure}
\begin{subfigure}{.11\textwidth}
  \centering
  \includegraphics[width=\linewidth]{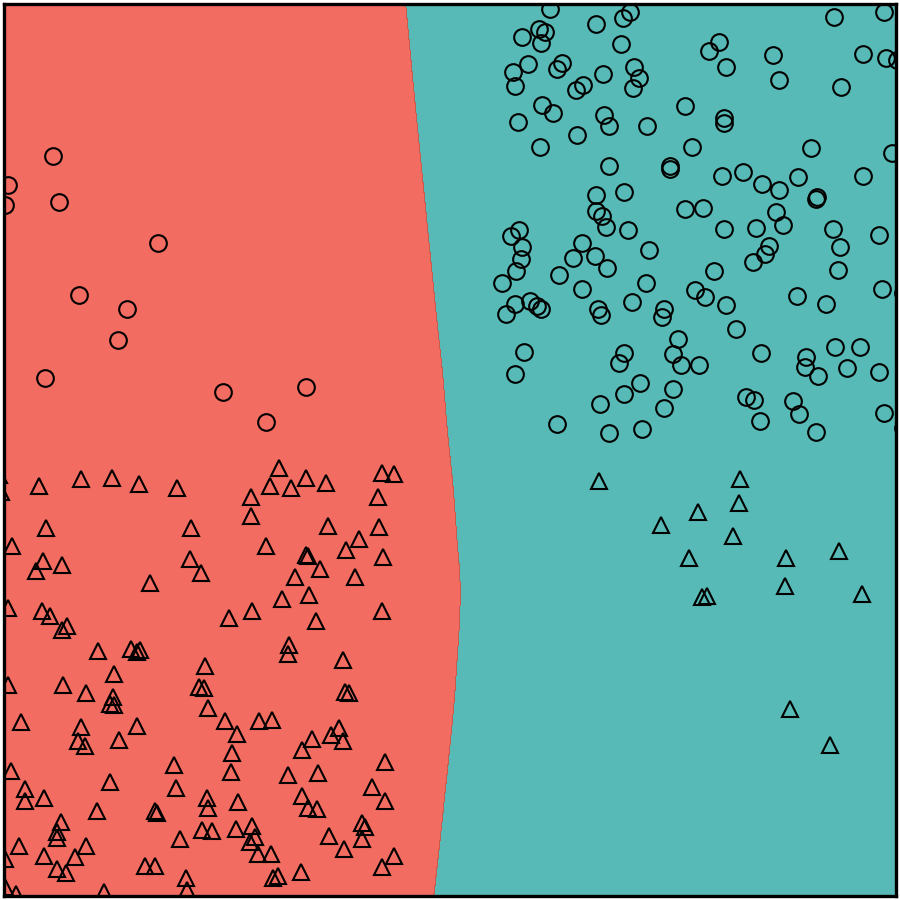}
  \caption{}
  \label{Disagree}
\end{subfigure}
\begin{subfigure}{.11\textwidth}
  \centering
  \includegraphics[width=\linewidth]{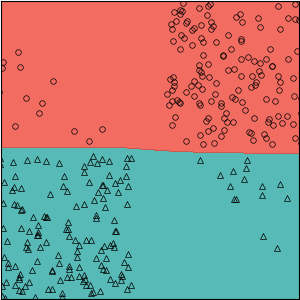}
  \caption{}
  \label{AD}
\end{subfigure}

\caption{The decision boundaries by (a) an ERM model that learns a simple solution; another model that learns to (b) purely agree with the ERM model, or (c) purely disagree with the ERM model, (d) or adaptively agree or disagree with the ERM model. Details are best appreciated when enlarged.}
\label{toy}
\end{figure}

\begin{table}[t]
\centering
\begin{tabular}{cccc}
\hline
Dataset & $\lambda$ & \# heads \\
\hline
SbP bias90 & 5 & 2 \\
SbP bias95 & 10 & 64 \\
SbP bias99 & 100 & 16 \\
GbP case1 & 1 & 128 \\
GbP case2 & 0.1 & 64 \\
DbP & 1 & 4 \\
OL3I & 300 & 8 \\
% TbD-Skin & 1 & \textbackslash & \textbackslash \\
\hline
\end{tabular}
\caption{Summary of hyper-parameters for each dataset.}
\label{tab:hyper-parameters}
\end{table}

\begin{table*}[t]
\setlength{\tabcolsep}{5pt}
\centering
\caption[Comparison on the SbP dataset]{Comparison on SbP. For methods do not use bias labels, the best and second-best performance (on the balanced AUC and overall AUC) are in \best{red} and \secondbest{blue}, respectively. The last row is the average of the mean overall AUC from all seven scenarios. $\rho$: the ratio of bias-aligned samples in the training set. $\dagger$ means that the method uses ground truth bias labels.}
\resizebox{\textwidth}{!}{
\begin{tabular}{cccccccccc}
\toprule[1.5pt]
\textbf{Dataset} & \textbf{Metric} & \textbf{G-DRO$^{\dagger}$ \cite{sagawa2020distributionally}} & \textbf{ERM} & \textbf{D-BAT \cite{pagliardini2022agree}} & \textbf{JTT \cite{liu2021just}} & \textbf{PBBL \cite{luo2022pseudo}} & \textbf{LfF \cite{nam2020learning}} & \textbf{DFA \cite{lee2021learning}}& \textbf{Ada-ABC} \\ 
\hline
\multirow{4}{*}{SbP ($\rho=99\%$)} & Aligned & 74.30$_{\pm2.28}$ & 99.03$_{\pm0.95}$ & 45.40$_{\pm16.17}$ & 97.02$_{\pm1.07}$ & 72.40$_{\pm0.71}$ & 77.50$_{\pm11.08}$ & 69.33$_{\pm1.74}$ & 75.11$_{\pm6.32}$\\
& Conflicting & 85.18$_{\pm1.26}$ & 4.93$_{\pm3.68}$ & 73.18$_{\pm14.68}$ & 19.54$_{\pm4.33}$ & 77.61$_{\pm0.45}$ & 64.38$_{\pm8.75}$ & 75.48$_{\pm2.61}$ & 81.20$_{\pm5.32}$\\
& Balanced & 79.74$_{\pm0.55}$ & 51.98$_{\pm1.60}$ & 59.29$_{\pm2.68}$ & 58.28$_{\pm1.87}$ & \secondbest{75.00$_{\pm0.18}$} & 70.94$_{\pm1.30}$ & 72.40$_{\pm0.48}$ & \best{78.15$_{\pm0.50}$}\\
\rowcolor[gray]{0.9}& Overall & 79.71$_{\pm0.40}$ & 59.21$_{\pm3.76}$ & 61.37$_{\pm0.92}$ & 64.37$_{\pm1.45}$ & \secondbest{74.70$_{\pm0.14}$} & 71.86$_{\pm1.72}$ & 72.49$_{\pm0.45}$ &\best{77.99$_{\pm0.34}$}\\

\multirow{4}{*}{SbP ($\rho=95\%$)} & Aligned & 68.65$_{\pm1.21}$ & 97.91$_{\pm0.75}$ & 64.55$_{\pm15.39}$ & 92.09$_{\pm3.86}$ & 71.72$_{\pm6.65}$ & 69.56$_{\pm2.01}$ & 69.04$_{\pm4.21}$ & 84.70$_{\pm1.66}$\\
& Conflicting & 89.86$_{\pm0.67}$ & 20.45$_{\pm5.96}$ & 67.82$_{\pm16.60}$ & 45.75$_{\pm11.94}$ & 84.68$_{\pm3.49}$ & 86.43$_{\pm1.67}$ & 84.94$_{\pm2.56}$ & 73.64$_{\pm2.22}$\\
& Balanced & 79.26$_{\pm0.47}$ & 59.18$_{\pm2.61}$ & 66.18$_{\pm1.75}$ & 68.92$_{\pm4.11}$ & \secondbest{78.20$_{\pm0.20}$} & 77.99$_{\pm0.18}$ & 76.99$_{\pm0.85}$ & \best{79.17$_{\pm0.47}$}\\
\rowcolor[gray]{0.9}& Overall & 79.80$_{\pm0.36}$ & 67.11$_{\pm1.85}$ & 66.93$_{\pm1.96}$ & 71.79$_{\pm2.35}$ & 78.04$_{\pm3.46}$ & \secondbest{78.28$_{\pm0.22}$} & 77.26$_{\pm0.49}$ & \best{79.12$_{\pm0.40}$}\\

\multirow{4}{*}{SbP ($\rho=90\%$)} & Aligned & 70.02$_{\pm2.20}$ & 96.51$_{\pm0.26}$ & 82.84$_{\pm4.47}$ & 87.36$_{\pm1.25}$ & 76.82$_{\pm2.80}$ & 68.57$_{\pm2.16}$ & 74.63$_{\pm4.61}$ & 83.38$_{\pm3.02}$\\
& Conflicting & 89.80$_{\pm0.87}$ & 31.21$_{\pm3.04}$ & 67.66$_{\pm5.12}$ & 63.58$_{\pm2.86}$ & 85.75$_{\pm0.32}$ & 87.46$_{\pm2.17}$ & 83.30$_{\pm3.96}$ & 77.31$_{\pm3.81}$\\
& Balanced & 79.94$_{\pm0.68}$ & 63.86$_{\pm1.39}$ & 75.26$_{\pm0.76}$ & 75.47$_{\pm0.95}$ & \best{80.49$_{\pm0.20}$} & 78.02$_{\pm0.18}$ & 78.96$_{\pm0.33}$ & \secondbest{80.34$_{\pm0.39}$}\\
\rowcolor[gray]{0.9}& Overall & 80.23$_{\pm0.37}$ & 69.84$_{\pm1.32}$ & 75.52$_{\pm0.73}$ & 76.25$_{\pm1.35}$ & \secondbest{78.78$_{\pm3.02}$} & 78.26$_{\pm0.18}$ & 78.76$_{\pm0.15}$ & \best{80.07$_{\pm0.21}$}\\
\hline

\multirow{4}{*}{GbP (case 1)} & Aligned & 85.81$_{\pm0.16}$ & 89.42$_{\pm0.25}$ & 86.88$_{\pm1.35}$ & 86.99$_{\pm0.56}$ & 90.17$_{\pm0.42}$ & 88.73$_{\pm1.34}$ & 86.12$_{\pm0.46}$ & 88.08$_{\pm0.45}$\\
& Conflicting & 83.96$_{\pm0.17}$ & 77.21$_{\pm0.33}$ & 83.43$_{\pm0.79}$ & 78.80$_{\pm1.09}$ & 77.07$_{\pm1.73}$ & 77.47$_{\pm0.09}$ & 77.92$_{\pm0.23}$ & 78.51$_{\pm0.59}$\\
& Balanced & 84.86$_{\pm0.05}$ & \secondbest{83.31$_{\pm0.05}$} & 80.00$_{\pm0.58}$ & 82.89$_{\pm0.80}$ & \best{83.62$_{\pm0.68}$} & 83.10$_{\pm0.64}$ & 82.02$_{\pm0.31}$ &83.30$_{\pm0.52}$\\
\rowcolor[gray]{0.9}& Overall & 84.93$_{\pm0.01}$ & \secondbest{83.75$_{\pm0.05}$} & 83.60$_{\pm0.87}$ & 83.16$_{\pm0.77}$ & \best{84.13$_{\pm0.56}$} & 83.46$_{\pm0.71}$ & 82.23$_{\pm0.30}$ &83.59$_{\pm0.53}$\\

\multirow{4}{*}{GbP (case 2)} & Aligned & 83.76$_{\pm1.59}$ & 89.39$_{\pm0.85}$ & 87.56$_{\pm0.77}$ & 89.30$_{\pm0.87}$ & 86.34$_{\pm0.64}$ & 87.25$_{\pm0.62}$ & 80.44$_{\pm0.58}$ & 88.80$_{\pm0.36}$\\
& Conflicting & 85.14$_{\pm0.31}$ & 76.13$_{\pm0.93}$ & 84.39$_{\pm0.52}$ & 80.82$_{\pm0.42}$ & 81.69$_{\pm2.67}$ & 79.07$_{\pm0.96}$ & 85.51$_{\pm0.57}$ &81.88$_{\pm1.33}$\\
& Balanced & 84.45$_{\pm0.65}$ & 82.76$_{\pm0.78}$ & 81.23$_{\pm1.01}$ & \secondbest{85.06$_{\pm0.23}$} & 84.02$_{\pm1.01}$ & 83.16$_{\pm0.45}$ & 82.98$_{\pm0.19}$ & \best{85.34$_{\pm0.48}$}\\
\rowcolor[gray]{0.9}& Overall & 84.42$_{\pm0.61}$ & 82.93$_{\pm0.78}$ & 84.40$_{\pm0.93}$ & \secondbest{85.20$_{\pm0.30}$} & 84.03$_{\pm0.97}$ & 83.19$_{\pm0.44}$ & 83.09$_{\pm0.21}$ &\best{85.44$_{\pm0.45}$}\\
\hline

\multirow{3}{*}{DbP} & w/ Drain & 87.99$_{\pm0.84}$ & 87.50$_{\pm0.64}$ & 87.19$_{\pm0.72}$ & 86.93$_{\pm0.84}$ & 87.01$_{\pm1.06}$ & 86.78$_{\pm0.48}$ & 87.31$_{\pm0.65}$ & 88.25$_{\pm0.31}$ \\
& w/o Drain & 77.32$_{\pm1.68}$ & 75.27$_{\pm2.03}$ & 75.87$_{\pm0.83}$ & 73.57$_{\pm0.93}$ & 76.90$_{\pm4.17}$ & 72.81$_{\pm0.52}$ & 74.60$_{\pm0.04}$ &  76.96$_{\pm1.32}$ \\
\rowcolor[gray]{0.9}& Overall & 82.60$_{\pm1.24}$ & 81.39$_{\pm1.31}$ & \secondbest{81.53$_{\pm0.72}$} & 80.25$_{\pm0.79}$ & 80.88$_{\pm0.65}$ & 79.79$_{\pm0.49}$ & 80.96$_{\pm0.31}$ & \best{82.61$_{\pm0.82}$} \\
\hline

\multirow{4}{*}{Ol3I} & Aligned & 71.53$_{\pm4.33}$ & 87.02$_{\pm1.13}$ & 78.13$_{\pm7.23}$ & 88.86$_{\pm1.19}$ & 86.92$_{\pm0.32}$ & 71.73$_{\pm4.69}$ & 89.46$_{\pm2.25}$ & 89.42$_{\pm2.21}$ \\

& Conflicting & 42.69$_{\pm1.83}$ & 34.52$_{\pm1.76}$ & 35.90$_{\pm4.90}$ & 31.17$_{\pm6.84}$ & 33.62$_{\pm5.76}$ & 62.07$_{\pm1.83}$ & 37.31$_{\pm3.82}$ &37.16$_{\pm4.97}$ \\

& Balanced & 57.11$_{\pm2.92}$ & 61.27$_{\pm0.66}$ & 57.01$_{\pm3.20}$ & 60.02$_{\pm3.43}$ & 60.27$_{\pm2.88}$ & \best{66.90$_{\pm1.43}$} & \secondbest{63.38$_{\pm0.78}$} &63.29$_{\pm1.87}$ \\

\rowcolor[gray]{0.9}& Overall & 62.05$_{\pm3.36}$ & 72.43$_{\pm0.96}$ & 64.52$_{\pm4.18}$ & 71.34$_{\pm2.16}$ & 71.21$_{\pm1.03}$ & 61.79$_{\pm1.03}$ & \best{74.27$_{\pm0.73}$} & \secondbest{74.15$_{\pm1.00}$} \\
\hline
\rowcolor[gray]{0.9}\multicolumn{2}{c}{Averaged Overall AUC} & 79.11 & 73.81 & 73.98 & 76.05 & \secondbest{78.82} & 76.66 & 78.44 & \best{80.42} \\

\toprule[1.5pt]
\end{tabular}
}
\label{exp:MDB_comparison}
\end{table*}

\subsubsection{Effects of Hyper-parameters}
We then show the effects of the proposed adaptive agreement learning and the bias council in Fig. \ref{fig:Ablation}.
The first and second rows illustrate the results on the SbP dataset with $\rho=99\%$ and the OL3I dataset, respectively.

We first set the number of classification heads fixed for the debiasing models, and evaluated the effects of $\lambda$.
Generally, $\lambda$ balances the learning on the bias-aligned samples and the bias-conflicting samples, as can be observed by the changes of aligned AUC and conflicting AUC from Figs. \ref{fig:lambda_SbP99_align_conflit} and \ref{fig:lambda_Ol3I_align_conflit}.
Also, the overall and balanced AUC would be harmed if $\lambda$ was set too small or large, as can be observed from Figs. \ref{fig:lambda_SbP99_overall_balanced} and \ref{fig:lambda_Ol3I_overall_balanced}.
Essentially, as the agree-disagree loss is applied on all samples,  $\lambda$ plays an important role in balancing the preference between learning agreement and learning disagreement.
Decreasing or increasing $\lambda$ would encourage the learning on the bias-aligned samples or the bias-conflicting samples, respectively.
This effect is more obvious on the SbP dataset where the ratio of biases in training and testing sets are controlled for better debiasing demonstration.
Then, we set $\lambda$ to a fixed value and varied the number of classification heads.
Generally, using a set of classifiers as a bias council in the biased model would help debiasing, as can be observed from Figs. \ref{fig:head_SbP99_overall_balanced} and \ref{fig:head_Ol3I_overall_balanced}.
Overall, it can be demonstrated that Ada-ABC can robustly learn from both the bias-aligned samples and the bias-conflicting samples and manage to mitigate the dataset biases.

\subsection{Comparative Study with MDB}
\subsubsection{Compared Approaches}
The compared methods include
i) the ERM model which is trained using cross entropy loss; 
ii) Group Distribution Robust Optimization (G-DRO) \cite{sagawa2020distributionally} which optimizes the performance of the worst-performing group with the knowledge of bias labels;
iii) D-BAT \cite{pagliardini2022agree}, an out-of-distribution generalization method that trains a set of different classifiers different from each other using an unlabeled OOD set. The validation sets are used as the OOD sets for this method.
iv) Just Train Twice (JTT) \cite{liu2021just}, a two-stage approach that first trains a biased ERM model and then develops the debiased model with a sampling ratio generated from the ERM model;
v) Pseudo Bias-Balanced Learning (PBBL) \cite{luo2022pseudo}, a two-stage method which estimates the Bayes distribution of biases and target labels first and then uses the prior for debiased model training;
(vi) Learning from Failure (LfF) \cite{nam2020learning}, a one-stage algorithm that developed debiased model with a loss-weighting strategy assisted by a highly biased model;
and
(vii) Disentangled Feature Augmentation (DFA) \cite{lee2021learning}, a one-stage method that further introduces feature disentanglement and augmentation into LfF.

\subsubsection{Implementation Details} 
As different datasets were with different bias scenarios, different value of $\lambda$ and number of heads were chosen, as shown in Table \ref{tab:hyper-parameters}.
Moreover, the hyper-parameter $q$ in the generalized cross entropy loss is set to 0.7 as recommended in \cite{zhang2018generalized}.

Our implementations used the PyTorch framework on a GeForce RTX™ 3090 GPU.
For SbP, GbP, and DbP, all methods were finetuned from DenseNet-121 \cite{huang2017densely}, whereas large-scale CXR pre-trained weights \cite{cohen2022torchxrayvision} were adopted for the CXR datasets.
For OL3I, experiments were conducted based on ResNet-18 with ImageNet pre-trained weights, following \cite{zong2023medfair}.
Adam \cite{kingma2014adam} with a learning rate of 1e-4 was used as the optimizer.
The results on the testing sets were obtained by the models with the best overall AUC on the validation set.
We constructed three runs for each method, and the averaged results as well as the standard deviation were reported.

\subsubsection{Quantitative Evaluation}

\textbf{Source-biased Pneumonia Classification.}
As can be observed in Table \ref{exp:MDB_comparison}, the ERM model could still be biased when finetuned on the biased dataset despite having been pre-trained on large-scale CXR dataset.
All debiasing algorithms mitigated dataset biases to certain levels.
Specifically, the one-stage methods LfF and DFA could prefer learning bias-conflicting samples.
On the other hand, our proposed Ada-ABC increased the AUC on bias-conflicting samples without a large sacrifice on the bias-aligned AUC.
As a result, Ada-ABC showed robust performance on all three cases with overall AUCs of 77.99\%, 79.12\%, and 80.07\% when $\rho=$ 99\%, 95\%, and 90\%, respectively, achieving consistent improvement compared with other methods that do not use bias label information.
It's worth noting that, Ada-ABC gradually achieved comparable performance to that of G-DRO with the decrease of $\rho$, demonstrating robust features learning capability.

\textbf{Gender-biased Pneumothorax Classification.}
By Table \ref{exp:MDB_comparison}, our proposed Ada-ABC showed high AUC on the bias-aligned samples with increased performance on the bias-conflicting samples.
As a result, Ada-ABC achieved 83.59\% and 85.44\% overall AUC under cases of GbP1 and GbP2, respectively. 
Notably, for GbP2, Ada-ABC could even surpass G-DRO on the balanced AUC and Overall AUC, showing an effective and robust solution to the gender bias in medical image classification.

\textbf{Drain-biased Pneumothorax Classification.}
As healthy cases do not have chest drains, we computed the AUC with healthy samples and pneumothorax cases with or without chest drains.
We also provided the overall AUC.
As reported in Table \ref{exp:MDB_comparison}, there is a clear performance gap between the AUCs computed on cases with or without chest drains, showing that it was much easier for the models to learn to distinguish pneumothorax with chest drains.
We found that other methods (except G-DRO and D-BAT) may perform even worse than the ERM model, which indicates that the dataset bias in this case is harder to mitigate.
In contrast, our proposed Ada-ABC demonstrated its effectiveness by achieving the best performance for all cases, with or without chest drains, showing that it learned more bias-invariant features.

\textbf{Age-biased Ischemic Heart Disease Prognosis.}
The OL3I dataset is not only biased but also highly imbalanced, posing a hard challenge to the debiasing algorithms. 
In our experiment, most of the compared algorithms were not even comparable to the ERM model.
The disadvantage of LfF was also amplified here, where much weight was put on the high-loss samples by the GCE model, yet the large proportion of low-loss samples was ignored, leading to high conflicting AUC and low aligned AUC.
Notably, G-DRO achieved a low overall AUC, which is in line with the findings by Zong \etal \cite{zong2023medfair}, and we deemed that it also over-weighed the minority groups.
In this dataset, DFA achieved the best overall AUC, mostly due to that its feature augmentation also helped alleviate the class imbalance.
The proposed Ada-ABC achieved the second-best overall AUC with a marginal difference compared to DFA, demonstrating robust feature learning under scenarios with more complex sub-group distribution shifts.

\textbf{Overall Results.}
The last row of Table \ref{exp:MDB_comparison} reports the average of the mean of overall AUC across the seven scenarios.
Our proposed Ada-ABC achieved 80.42\% averaged result on the medical debiasing benchmark, with clear improvement compared with either the two-stage or one-stage debiasing approaches.
We found that Ada-ABC could even surpass the performance of G-DRO, showing robust feature learning capability and consistent improvement.

\subsubsection{Qualitative Visualization} We visualize the saliency maps \cite{zhou2016learning} of the ERM model and the debiasing model developed with Ada-ABC on the four datasets in Fig. \ref{Fig:Saliency}.
In particular, $t=b=1$ for all the images shown here, and both models gave correct predictions.
However, the ERM model tended to use the wrong regions to identify the patient.
In contrast, the debiasing model developed using Ada-ABC could successfully mitigate the bias on the shown samples, attending to the correct regions corresponding to the disease signs.
In other words, our proposed model could learn to make the right decisions for the right reasons.
This observation further highlights the significance of addressing dataset bias for robust and trustworthy medical image analysis.

\begin{figure}[t]
\centering
\includegraphics[width=\linewidth]{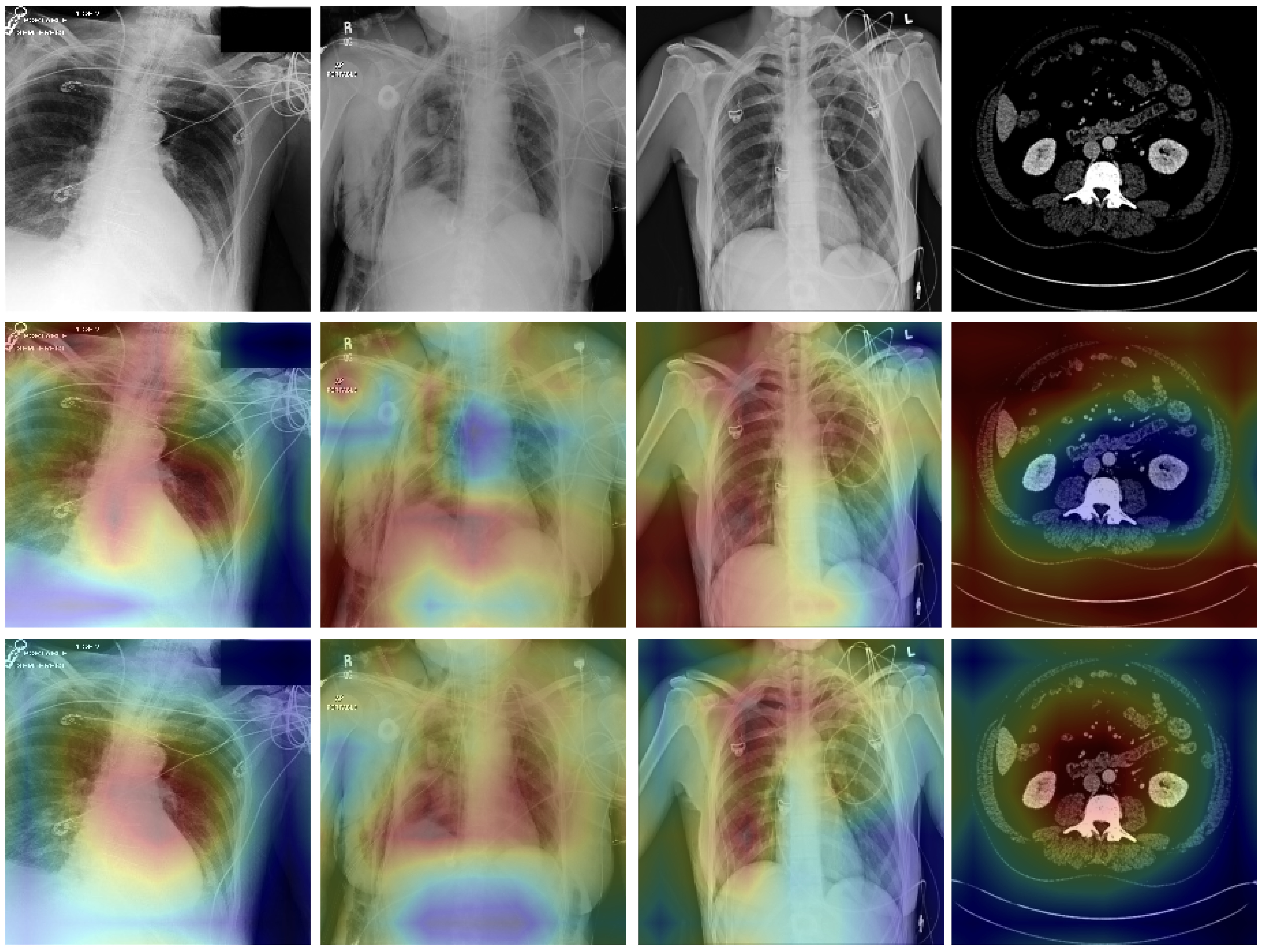}    
\caption{The saliency maps by the ERM model (2nd row) and the debiasing model by Ada-ABC (3rd row). Samples from columns 1-4 are from SbP, GbP, DbP, and OL3I, respectively. Both models made correct predictions but were looking for different reasons.}
\label{Fig:Saliency}
\end{figure}

\section{Conclusion}
In summary, this paper proposes a simple yet effective one-stage debiasing framework, \textbf{Ada}ptive \textbf{A}greement from \textbf{B}iased \textbf{C}ouncil (Ada-ABC).
Ada-ABC is based on simultaneous training of a biased network and a debiasing network.
The biased model is developed to capture the bias information in the dataset, using a bias council trained with the generalized cross entropy loss to amplify the learning preference on the samples with spurious correlation.
Then, the debiasing model adaptively learns to agree or disagree with the biased model on the samples with or without spurious correlation, respectively, under the supervision of our proposed adaptive learning loss.
We provided theoretical analysis to prove that the debiasing model could learn the targeted feature when the biased model successfully captures the bias information.
Further, we constructed the first medical debiasing benchmark (MBD) to our best knowledge, which consists of four datasets with seven different bias scenarios.
Based on MBD, we validated the effectiveness of Ada-ABC in mitigating dataset bias with extensive experiments and showed that it consistently achieves state-of-the-art performance in most of the studied cases.
We demonstrated that, both theoretically and practically, Ada-ABC provides a promising way for more accurate, fair, and trustworthy medical image analysis.

\bibliographystyle{sss.bst}
\bibliography{refs}

\begin{thebibliography}{10}

\bibitem{topol2019high}
E.~J. Topol, ``High-performance medicine: the convergence of human and artificial intelligence,'' {\em Nature medicine}, vol.~25, no.~1, pp.~44--56, 2019.

\bibitem{geirhos2020shortcut}
R.~Geirhos {\em et~al.}, ``Shortcut learning in deep neural networks,'' {\em Nature Machine Intelligence}, vol.~2, no.~11, pp.~665--673, 2020.

\bibitem{oakden2020hidden}
L.~Oakden-Rayner {\em et~al.}, ``Hidden stratification causes clinically meaningful failures in machine learning for medical imaging,'' in {\em Proceedings of the ACM conference on health, inference, and learning}, pp.~151--159, 2020.

\bibitem{luo2022pseudo}
L.~Luo {\em et~al.}, ``Pseudo bias-balanced learning for debiased chest x-ray classification,'' in {\em Medical Image Computing and Computer Assisted Intervention--MICCAI 2022: 25th International Conference, Singapore, September 18--22, 2022, Proceedings, Part VIII}, pp.~621--631, Springer, 2022.

\bibitem{degrave2021ai}
A.~J. DeGrave, J.~D. Janizek and S.-I. Lee, ``Ai for radiographic covid-19 detection selects shortcuts over signal,'' {\em Nature Machine Intelligence}, vol.~3, no.~7, pp.~610--619, 2021.

\bibitem{larrazabal2020gender}
A.~J. Larrazabal {\em et~al.}, ``Gender imbalance in medical imaging datasets produces biased classifiers for computer-aided diagnosis,'' {\em Proceedings of the National Academy of Sciences}, vol.~117, no.~23, pp.~12592--12594, 2020.

\bibitem{gichoya2022ai}
J.~W. Gichoya {\em et~al.}, ``Ai recognition of patient race in medical imaging: a modelling study,'' {\em The Lancet Digital Health}, vol.~4, no.~6, pp.~e406--e414, 2022.

\bibitem{bluemke2020assessing}
D.~A. Bluemke {\em et~al.}, ``Assessing radiology research on artificial intelligence: a brief guide for authors, reviewers, and readers—from the radiology editorial board,'' 2020.

\bibitem{taylor2022uk}
S.~Taylor-Phillips {\em et~al.}, ``Uk national screening committee's approach to reviewing evidence on artificial intelligence in breast cancer screening,'' {\em The Lancet Digital Health}, vol.~4, no.~7, pp.~e558--e565, 2022.

\bibitem{nam2020learning}
J.~Nam {\em et~al.}, ``Learning from failure: De-biasing classifier from biased classifier,'' {\em Advances in Neural Information Processing Systems}, vol.~33, pp.~20673--20684, 2020.

\bibitem{shah2020pitfalls}
H.~Shah {\em et~al.}, ``The pitfalls of simplicity bias in neural networks,'' {\em Advances in Neural Information Processing Systems}, vol.~33, pp.~9573--9585, 2020.

\bibitem{kalimeris2019sgd}
D.~Kalimeris {\em et~al.}, ``Sgd on neural networks learns functions of increasing complexity,'' {\em Advances in neural information processing systems}, vol.~32, 2019.

\bibitem{hermann2020shapes}
K.~Hermann and A.~Lampinen, ``What shapes feature representations? exploring datasets, architectures, and training,'' {\em Advances in Neural Information Processing Systems}, vol.~33, pp.~9995--10006, 2020.

\bibitem{teney2022evading}
D.~Teney {\em et~al.}, ``Evading the simplicity bias: Training a diverse set of models discovers solutions with superior ood generalization,'' in {\em Proceedings of the IEEE/CVF Conference on Computer Vision and Pattern Recognition}, pp.~16761--16772, 2022.

\bibitem{rueckel2020impact}
J.~Rueckel {\em et~al.}, ``Impact of confounding thoracic tubes and pleural dehiscence extent on artificial intelligence pneumothorax detection in chest radiographs,'' {\em Investigative Radiology}, vol.~55, no.~12, pp.~792--798, 2020.

\bibitem{rouzrokh2022mitigating}
P.~Rouzrokh {\em et~al.}, ``Mitigating bias in radiology machine learning: 1. data handling,'' {\em Radiology: Artificial Intelligence}, vol.~4, no.~5, p.~e210290, 2022.

\bibitem{li2019repair}
Y.~Li and N.~Vasconcelos, ``Repair: Removing representation bias by dataset resampling,'' in {\em Proceedings of the IEEE/CVF Conference on Computer Vision and Pattern Recognition}, pp.~9572--9581, 2019.

\bibitem{sagawa2020distributionally}
S.~Sagawa {\em et~al.}, ``Distributionally robust neural networks for group shifts: On the importance of regularization for worst-case generalization,'' in {\em International Conference on Learning Representations}, 2020.

\bibitem{arjovsky2019invariant}
M.~Arjovsky {\em et~al.}, ``Invariant risk minimization,'' {\em arXiv preprint arXiv:1907.02893}, 2019.

\bibitem{zhang2021quantifying}
G.~Zhang {\em et~al.}, ``Quantifying and improving transferability in domain generalization,'' {\em Advances in Neural Information Processing Systems}, vol.~34, pp.~10957--10970, 2021.

\bibitem{zhou2022sparse}
X.~Zhou {\em et~al.}, ``Sparse invariant risk minimization,'' in {\em International Conference on Machine Learning}, pp.~27222--27244, PMLR, 2022.

\bibitem{tartaglione2021end}
E.~Tartaglione, C.~A. Barbano and M.~Grangetto, ``End: Entangling and disentangling deep representations for bias correction,'' in {\em Proceedings of the IEEE/CVF Conference on Computer Vision and Pattern Recognition}, pp.~13508--13517, 2021.

\bibitem{zhu2021learning}
W.~Zhu {\em et~al.}, ``Learning bias-invariant representation by cross-sample mutual information minimization,'' in {\em Proceedings of the IEEE/CVF International Conference on Computer Vision}, pp.~15002--15012, 2021.

\bibitem{sohoni2020no}
N.~Sohoni {\em et~al.}, ``No subclass left behind: Fine-grained robustness in coarse-grained classification problems,'' {\em Advances in Neural Information Processing Systems}, vol.~33, pp.~19339--19352, 2020.

\bibitem{liu2021just}
E.~Z. Liu {\em et~al.}, ``Just train twice: Improving group robustness without training group information,'' in {\em International Conference on Machine Learning}, pp.~6781--6792, PMLR, 2021.

\bibitem{lee2021learning}
J.~Lee {\em et~al.}, ``Learning debiased representation via disentangled feature augmentation,'' {\em Advances in Neural Information Processing Systems}, vol.~34, pp.~25123--25133, 2021.

\bibitem{kim2021biaswap}
E.~Kim, J.~Lee and J.~Choo, ``Biaswap: Removing dataset bias with bias-tailored swapping augmentation,'' in {\em Proceedings of the IEEE/CVF International Conference on Computer Vision}, pp.~14992--15001, 2021.

\bibitem{luo2022rethinking}
L.~Luo {\em et~al.}, ``Rethinking annotation granularity for overcoming shortcuts in deep learning--based radiograph diagnosis: A multicenter study,'' {\em Radiology: Artificial Intelligence}, vol.~4, no.~5, p.~e210299, 2022.

\bibitem{viviano2020saliency}
J.~D. Viviano {\em et~al.}, ``Saliency is a possible red herring when diagnosing poor generalization,'' in {\em ICLR}, 2020.

\bibitem{seyyed2020chexclusion}
L.~Seyyed-Kalantari {\em et~al.}, ``Chexclusion: Fairness gaps in deep chest x-ray classifiers,'' in {\em BIOCOMPUTING 2021: proceedings of the Pacific symposium}, pp.~232--243, World Scientific, 2020.

\bibitem{seyyed2021underdiagnosis}
L.~Seyyed-Kalantari {\em et~al.}, ``Underdiagnosis bias of artificial intelligence algorithms applied to chest radiographs in under-served patient populations,'' {\em Nature medicine}, vol.~27, no.~12, pp.~2176--2182, 2021.

\bibitem{zufiria2022analysis}
B.~Zufiria {\em et~al.}, ``Analysis of potential biases on mammography datasets for deep learning model development,'' in {\em Applications of Medical Artificial Intelligence: First International Workshop, AMAI 2022, Held in Conjunction with MICCAI 2022, Singapore, September 18, 2022, Proceedings}, pp.~59--67, Springer, 2022.

\bibitem{zhao2020training}
Q.~Zhao, E.~Adeli and K.~M. Pohl, ``Training confounder-free deep learning models for medical applications,'' {\em Nature communications}, vol.~11, no.~1, p.~6010, 2020.

\bibitem{hong2021unbiased}
Y.~Hong and E.~Yang, ``Unbiased classification through bias-contrastive and bias-balanced learning,'' {\em Advances in Neural Information Processing Systems}, vol.~34, 2021.

\bibitem{kim2022learning}
N.~Kim {\em et~al.}, ``Learning debiased classifier with biased committee,'' in {\em Advances in Neural Information Processing Systems}, 2022.

\bibitem{arpit2017closer}
D.~Arpit {\em et~al.}, ``A closer look at memorization in deep networks,'' in {\em International conference on machine learning}, pp.~233--242, PMLR, 2017.

\bibitem{pagliardini2022agree}
M.~Pagliardini {\em et~al.}, ``Agree to disagree: Diversity through disagreement for better transferability,'' in {\em ICLR}, 2023.

\bibitem{zhang2018generalized}
Z.~Zhang and M.~Sabuncu, ``Generalized cross entropy loss for training deep neural networks with noisy labels,'' {\em Advances in neural information processing systems}, vol.~31, 2018.

\bibitem{nam2021diversity}
G.~Nam {\em et~al.}, ``Diversity matters when learning from ensembles,'' {\em Advances in neural information processing systems}, vol.~34, pp.~8367--8377, 2021.

\bibitem{luo2020deep}
L.~Luo {\em et~al.}, ``Deep mining external imperfect data for chest x-ray disease screening,'' {\em IEEE transactions on medical imaging}, vol.~39, no.~11, pp.~3583--3594, 2020.

\bibitem{johnson2019mimic}
A.~E. Johnson {\em et~al.}, ``Mimic-cxr, a de-identified publicly available database of chest radiographs with free-text reports,'' {\em Scientific data}, vol.~6, no.~1, pp.~1--8, 2019.

\bibitem{wang2017chestx}
X.~Wang {\em et~al.}, ``Chestx-ray8: Hospital-scale chest x-ray database and benchmarks on weakly-supervised classification and localization of common thorax diseases,'' in {\em Proceedings of the IEEE conference on computer vision and pattern recognition}, pp.~2097--2106, 2017.

\bibitem{zambrano2023opportunistic}
J.~M. Zambrano~Chaves {\em et~al.}, ``Opportunistic assessment of ischemic heart disease risk using abdominopelvic computed tomography and medical record data: a multimodal explainable artificial intelligence approach,'' {\em Scientific Reports}, vol.~13, no.~1, p.~21034, 2023.

\bibitem{zong2023medfair}
Y.~Zong, Y.~Yang and T.~Hospedales, ``Medfair: Benchmarking fairness for medical imaging,'' in {\em The Eleventh International Conference on Learning Representations}, 20223.

\bibitem{huang2017densely}
G.~Huang {\em et~al.}, ``Densely connected convolutional networks,'' in {\em Proc. IEEE Conf. Comput. Vis. Pattern Recognit.}, pp.~4700--4708, 2017.

\bibitem{cohen2022torchxrayvision}
J.~P. Cohen {\em et~al.}, ``Torchxrayvision: A library of chest x-ray datasets and models,'' in {\em International Conference on Medical Imaging with Deep Learning}, pp.~231--249, PMLR, 2022.

\bibitem{kingma2014adam}
D.~P. Kingma and J.~Ba, ``Adam: A method for stochastic optimization,'' in {\em Proc. Int. Conf. Learn. Representations}, 2015.

\bibitem{zhou2016learning}
B.~Zhou {\em et~al.}, ``Learning deep features for discriminative localization,'' in {\em CVPR}, pp.~2921--2929, 2016.

\end{thebibliography}

\end{document}